\newtheorem{theorem}{\textbf{Theorem}}
\newtheorem{lemma}{\textbf{Lemma}}
\newtheorem{corollary}{\textbf{Corollary}}
\newtheorem{assumption}{\textbf{Assumption}}
\newtheorem{proposition}{\textbf{Proposition}}
\renewcommand{\algorithmicrequire}{\textbf{Input:}}  
\renewcommand{\algorithmicensure}{\textbf{Output:}} 
\definecolor{darkgreen}{RGB}{0,200,0}
\begin{document}

%
\title{{Hierarchical Split Federated Learning: Convergence Analysis and System Optimization}}

%
%
%
%


\author{Zheng~Lin, Wei~Wei, Zhe~Chen,~\IEEEmembership{Member,~IEEE,} Chan-Tong Lam,~\IEEEmembership{Senior Member,~IEEE,}\\  Xianhao Chen,~\IEEEmembership{Member,~IEEE}, Yue Gao,~\IEEEmembership{Fellow,~IEEE}, and Jun Luo,~\IEEEmembership{Fellow,~IEEE}
\thanks{Z. Lin, W. Wei, and X. Chen are with the Department of Electrical and Electronic Engineering, University of Hong Kong, Pok Fu Lam, Hong Kong S.A.R., China (e-mail: linzheng@eee.hku.hk; weiwei@eee.hku.hk; xchen@eee.hku.hk).}
\thanks{Z. Chen and Y. Gao are with the Institute
of Space Internet, Fudan University, Shanghai, China, and the School of Computer Science, Fudan University, Shanghai, China (e-mail: zhechen@fudan.edu.cn; gao.yue@fudan.edu.cn).}
\thanks{Chan-Tong Lam is with the Faculty of Applied Sciences, Macao Polytechnic University, Macau S.A.R., China (e-mail: ctlam@mpu.edu.mo).}
\thanks{Jun Luo is with the School of Computer Engineering, Nanyang Technological University, Singapore (e-mail: junluo@ntu.edu.sg).}
\thanks{\textit{(Corresponding author: Xianhao Chen)}}
}

\markboth{}%
{Shell \MakeLowercase{\textit{et al.}}: Bare Advanced Demo of IEEEtran.cls for IEEE Computer Society Journals}
%



\IEEEtitleabstractindextext{%
\begin{abstract}
As AI models expand in size, it has become increasingly challenging to deploy federated learning (FL) on resource-constrained edge devices. To tackle this issue, \textit{split federated learning} (SFL) has emerged as an FL framework with reduced workload on edge devices via model splitting; it has received extensive attention from the research community in recent years. Nevertheless, most prior works on SFL focus only on a two-tier architecture without harnessing multi-tier cloud-edge computing resources. In this paper, we intend to analyze and optimize the learning performance of SFL under multi-tier systems. Specifically, we propose the hierarchical SFL (HSFL) framework and derive its convergence bound. Based on the theoretical results, we formulate a joint optimization problem for model splitting (MS) and model aggregation (MA). To solve this rather hard problem, we then decompose it into MS and MA sub-problems that can be solved via an iterative descending algorithm. Simulation results demonstrate that the tailored algorithm can effectively optimize MS and MA for SFL within virtually any multi-tier system.
\end{abstract}


\begin{IEEEkeywords}
Distributed learning, hierarchical split federated learning, model aggregation, model splitting.
\end{IEEEkeywords}}

\maketitle

\IEEEdisplaynontitleabstractindextext

%
\IEEEpeerreviewmaketitle


\section{Introduction}\label{Intro}

The growing prevalence of mobile smart devices and the rapid advancement in information and communications technology (ICT) result in the phenomenal growth of the data generated at the network edge. International Data Corporation (IDC) forecasts that 159.2 zettabytes (ZB) of data will be generated globally in 2024, and this figure is expected to double to 384.6 ZB by 2028~\cite{IDC}. 
To unlock the potential of big data, on-device learning begins to play a crucial role.
In this respect, federated learning (FL)~\cite{mcmahan2017communication} has emerged as a predominant privacy-enhancing on-device learning framework, where participating devices train their local models in parallel on their private datasets and then send the updated models to an FL server for model synchronization~\cite{han2023federated,lin2023fedsn,hu2024accelerating,su2024expediting,yan2023heroes,zhang2024satfed,fang2024automated,zhang2024fedac}.

By extracting intelligence from previously inaccessible private data while remaining the data locally, FL has succeeded in numerous commercial products and applications influential on our daily lives~\cite{hard2018federated,maugeri2024user}. However, while FL offers advantages such as enhanced privacy, its deployment becomes increasingly challenging as machine learning (ML) models continue to grow in size~\cite{lin2024splitlora,qiu2024ifvit,lyu2023optimal}. For instance, the Gemini Nano-2 model~\cite{team2023gemini}, a recently popular on-device large language model, consists of 3.25 billion parameters (equivalent to 3GB in 32-bit floats). The intensive on-device compute costs urgently calls for an alternative/complementary approach to FL. Inherited from split learning (SL)~\cite{vepakomma2018split}, split federated learning (SFL)~\cite{thapa2022splitfed} mitigates the aforementioned issue via split training between a server and edge devices in parallel. Specifically, SFL reduces the client-side computing burden by offloading the primary computing workload to an edge server via model partitioning while periodically aggregating client-side and server-side sub-models following the principle of FL~\cite{kim2024edge,tirana2024workflow,xia2022hsfl,lin2023pushing}. By leveraging device-server collaboration, SFL has garnered significant attention from academia and industry. For instance, Huawei has advocated NET4AI~\cite{huawei2019}, a 6G intelligent system architecture built on SFL, to support future edge ML tasks.



\begin{figure}[t!]
\setlength\abovecaptionskip{6pt}
\setlength\subfigcapskip{0pt}
\centering
\includegraphics[width=0.47\textwidth]{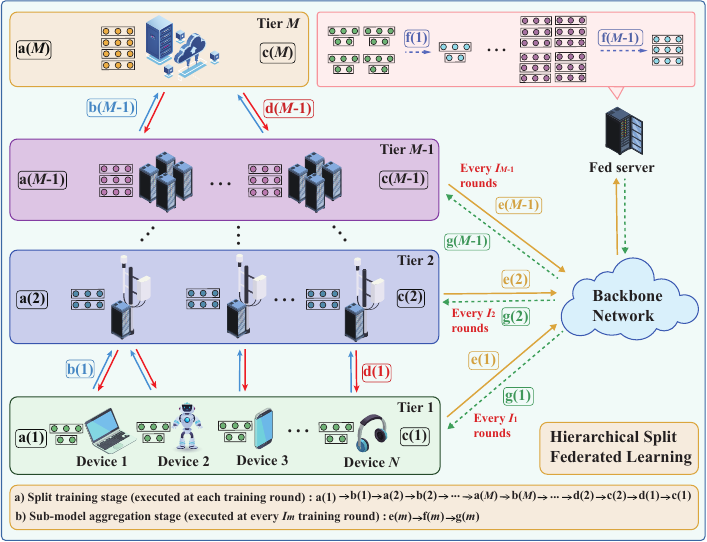}
\caption{The illustration of HSFL over multi-tier computing systems, where 
a($m$) and c($m$) denote $m$-th tier sub-model FP and BP, b($m$) and d($m$) are activations and activations' gradients transmissions between $m$-th tier and ($m$+1)-th tier, e($m$), f($m$) and g($m$) represent $m$-th tier sub-model uplink uploading, aggregation, and downlink transmissions, respectively.}
\label{HSFL}
\end{figure}

\begin{figure}[t!]
    \setlength\abovecaptionskip{6pt}
     \setlength\subfigcapskip{0pt}
     \centering
\subfigure[Two- and three-tier SFL.]{
    \includegraphics[width=.446\columnwidth]{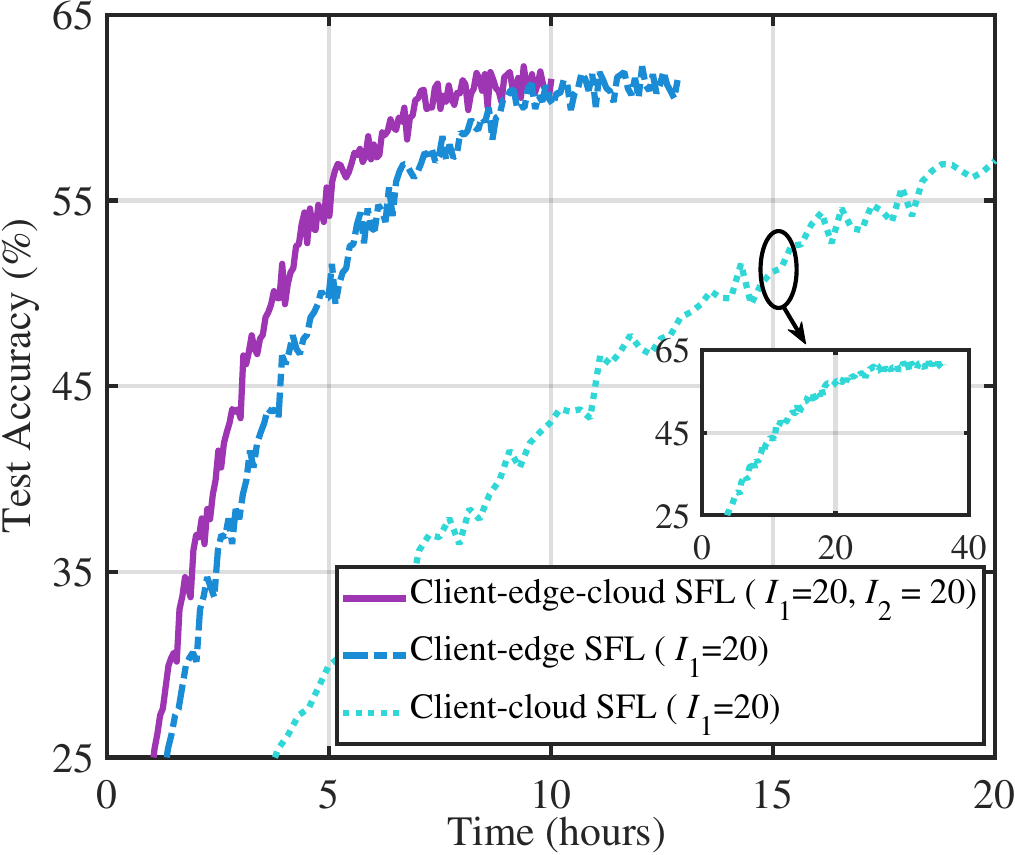}
    \label{sfig:three_two_tier_compare}
}
\subfigure[Test accuracy vs. communications.]{
    \includegraphics[width=.440\columnwidth]{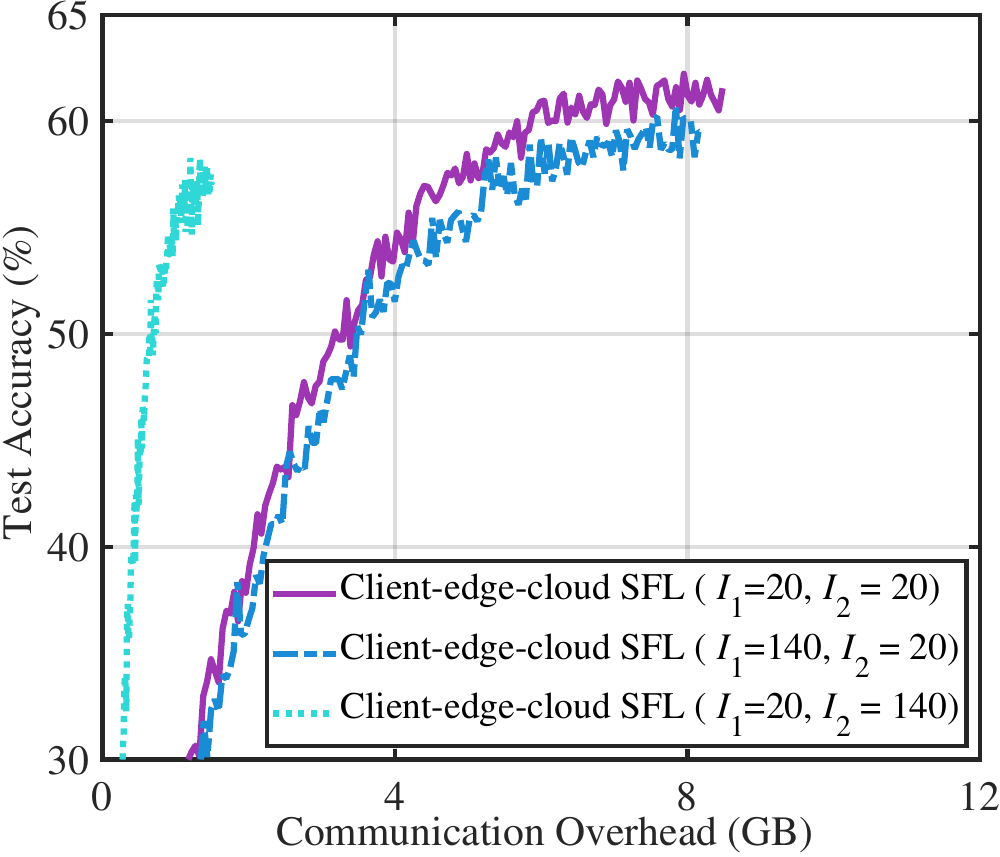}
    \label{sfig:moti_1_accuracy_communication_overhead}
}\\
\subfigure[Training latency vs. model split point.]{
    \includegraphics[width=.455\columnwidth]{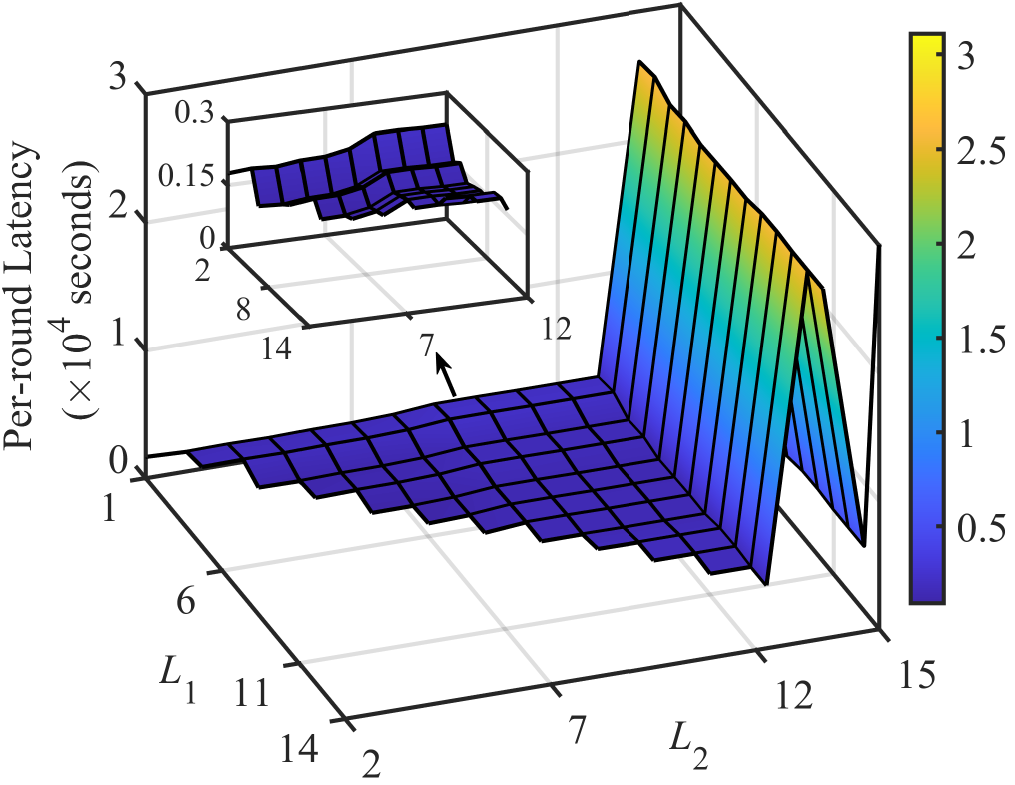}
    \label{sfig:model_split_compu_commu}
}
\subfigure[Test accuracy vs. epochs.]{
    \includegraphics[width=.435\columnwidth]{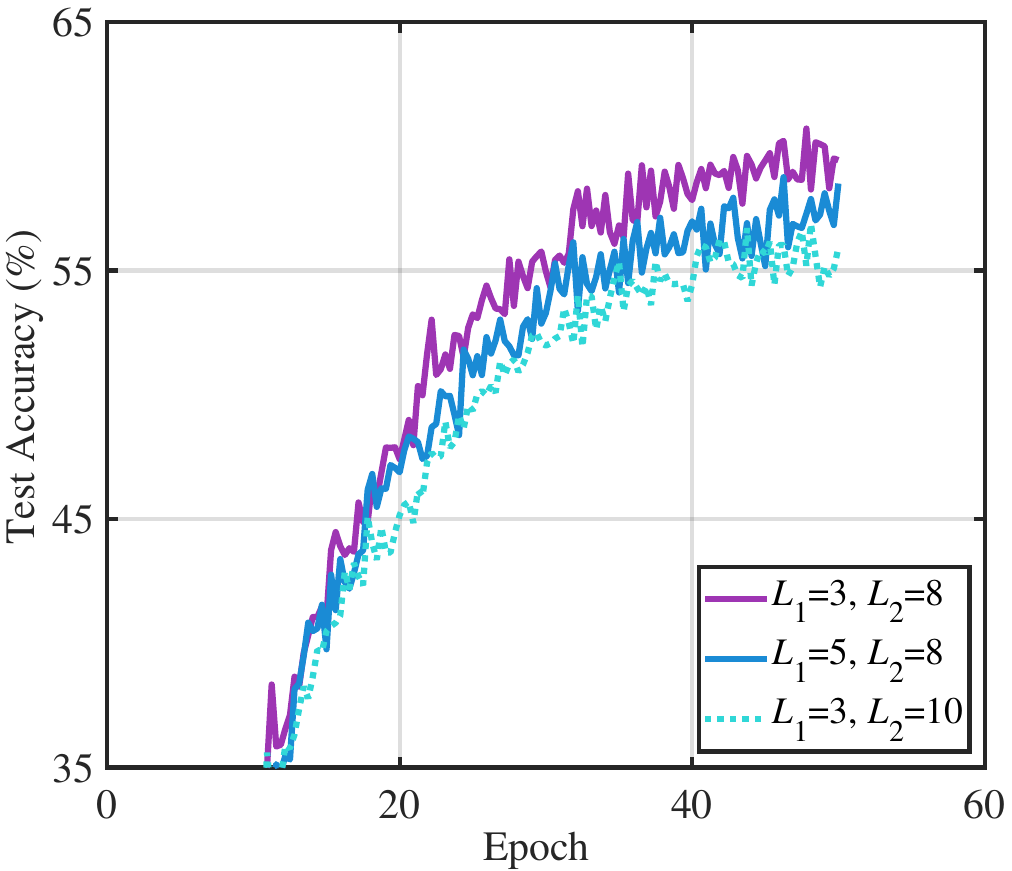}
    \label{sfig:accuracy_epoch}
}
    \caption{The comparison of two-tier and three-tier client-edge-cloud SFL and the impact of sub-model MA and MS on training performance and overhead. Fig.~\ref{sfig:three_two_tier_compare} compares the performance of two- and three-tier SFL for test accuracy versus training time. Fig.~\ref{sfig:moti_1_accuracy_communication_overhead} demonstrates the performance for test accuracy versus communication overhead with the MA intervals for the client-side sub-model ($I_1$) and edge-side sub-model $I_2$, given cutting layers $L_1=3$ and $L_2=8$.  Fig.~\ref{sfig:model_split_compu_commu} presents the per-round end-to-end latency versus cut layers, revealing the complex and significant impact of the cutting layer on communication-computing latency.  Fig.~\ref{sfig:accuracy_epoch} illustrates the performance for test accuracy versus epochs under different cut layers, given MA intervals $I_1=140$ and $I_2=20$, showing that MS has a non-trivial impact on model convergence. The experiment is conducted on the CIFAR-10 dataset under the non-IID setting. The transmission rates between edge devices and a cloud server is set to 15 Mbps~\cite{li2014towards}. The other experimental parameters are consistent with Sec.~\ref{simu_results}.}
    \label{fig:motivation_1}
\end{figure}

Despite the surging research interest in SFL, prior works mostly focus on two-tier SFL systems, consisting of only one split training server. As the number of served edge devices increases, two-tier SFL , however, can hardly support model training for users at a large scale~\cite{lin2023efficient}. To address this scalability issue, a natural idea is to extend SFL to its multi-tier counterpart, called  \underline{h}ierarchical \underline{s}plit \underline{f}ederated \underline{l}earning (HSFL) as illustrated in Fig.~\ref{HSFL}. Specifically, HSFL partitions heavy computing workloads across computing entities across different computing tiers, such as servers on the cloud, at the cell aggregation site, and co-located with macro/small base stations, to accelerate the training process. Taking three-tier client-edge-cloud systems as an example, Fig.~\ref{sfig:three_two_tier_compare} shows that it exhibits faster convergence speed than that of client-edge SFL and client-cloud SFL, by factors of 1.6 and 4.6, respectively, demonstrating the benefits of implementing HSFL.

Given the limited communication-computing resources in multi-tier cloud-edge systems, model splitting (MS) and model aggregation (MA)  significantly impact the performance of HSFL. Several observations on MS and MA can be drawn from our preliminary results in Fig. 2. 

\begin{itemize}
\item {\bf{Impact of MA on HSFL}}: Fig.~\ref{sfig:moti_1_accuracy_communication_overhead} reveals the tradeoff between convergence accuracy and communication overhead by varying the aggregation interval $I_m$ (i.e., $m$-th tier sub-models are aggregated at the FL server every $I_m$ training rounds). Smaller $I_m$ leads to higher convergence accuracy but incurs more communication overhead. Furthermore, it can be observed that the aggregation at different tiers, i.e., $I_1$ and $I_2$, has varied impacts on model convergence, making MA optimization non-trivial.
\item {\bf{Impact of MS on HSFL}}: Fig.~\ref{sfig:accuracy_epoch} illustrates that a shallower cut layer achieves higher training accuracy. This is because a larger portion of the model is synchronized more frequently, with the sub-model on the cloud server always synchronized. In general, a higher tier intends to involve less communication costs for model synchronization, allowing more frequent synchronization or smaller $I_m$. However, making the cut layer shallower may also cause higher communication overhead for exchanging smashed data (i.e., activations and activations' gradients), as shallower layers in some neural networks, such as convolutional neural networks (CNNs), typically have larger output dimensions. 
\end{itemize}



Therefore, optimizing the HSFL system requires quantifying the intertwined effects of MS and MA on training performance.



To fill the void, in this paper, we aim to minimize the training latency of HSFL to achieve the required learning performance in multi-tier resource-constrained systems. To this end, we first derive the convergence bound of HSFL to quantify the impact of MS and MA on training convergence. Different from FL, HSFL features \textit{different synchronization frequencies of sub-models} across various tiers, thus requiring careful theoretical trade-off analysis and optimization. For instance, the top sub-model on the cloud server should always be synchronized as there is only one server on the top tier, whereas the sub-models at the bottom tier involving massive users should only be synchronized infrequently for latency/resource savings. It is also noted that the convergence analysis of HSFL can be considerably more challenging than two-tier SFL as derived in our previous work~\cite{lin2024adaptsfl}, due to the varied aggregation intervals and splitting points across diverse tiers. Subsequently, we formulate a joint optimization problem for MA and MS based on the derived convergence upper bound and develop the corresponding solution approach. The key contributions of this paper are summarized as follows:



\begin{itemize}
\item Given any number of network tiers, we theoretically derive the convergence bound of HSFL to quantify the impact of MS and MA on training performance under multi-tier systems. The theoretical analysis provides a foundation for system optimization of HSFL.
\item Based on the derived upper bound, we formulate a joint optimization problem for MA and MS, aiming to minimize training latency for achieving the target learning performance.
\item We decompose the joint optimization problem into two tractable sub-problems for MS and MA and develop efficient algorithms to solve them optimally, respectively. Then, we devise a block coordinate descent (BCD) based method to obtain efficient sub-optimal solutions to the joint optimization problem.
\item We conduct extensive simulations across various datasets to validate the theoretical analysis and effectiveness of the proposed solutions. 
\end{itemize}

The remainder of this paper is organized as follows. Sec.~\ref{Rel_Work} introduces related work. Sec.~\ref{sec_sfl} elaborates on the HSFL framework. Section~\ref{convergence_HSFL} provides the convergence analysis. We formulate the joint optimization problem in Sec.~\ref{prob_formu} and develop the corresponding solution approach in Sec.~\ref{solu_appro}. Sec.~\ref{sec:implementa} introduces the system implementation, followed by performance evaluation in Sec.~\ref{simu_results}. Finally, concluding remarks are presented in Sec.~\ref{conclu}.

\section{Related Work}\label{Rel_Work}

MA has been extensively studied to enhance the learning performance of federated learning~\cite{luo2024communication,wang2019adaptive,wan2021convergence,shi2022talk,yu2019computation}. Luo~\textit{et al.}~\cite{luo2024communication} propose a communication-efficient FL training framework that enhances training performance while ensuring model convergence through adaptive MA control. Wang~\textit{et al.}~\cite{wang2019adaptive} develop an optimization algorithm that strikes an optimal balance between local update and global parameter aggregation to minimize the training loss under a resource budget. Wan~\textit{et al.}~\cite{wan2021convergence} devise an optimal network scheduling design of local training epochs, data heterogeneity, and partial client participation to enhance the efficiency of FL. Shi~\textit{et al.}~\cite{shi2022talk} and Yu~\textit{et al.}~\cite{yu2019computation} propose an adaptive batch size scheme to minimize the total latency of FL in resource-constrained mobile computing system. The core of the above design lies in optimization based on a derived convergence upper bound. However, these schemes do not apply to HSFL since the model is split across diverse computing tiers and can be implemented with varied MA intervals.

On the other hand, MS plays a vital role in SL, which heavily affects the computing workload, communication overhead, and convergence speed. Some research works have investigated the optimal MS strategy in SL~\cite{wu2023split,kim2023bargaining,lin2023efficient}. Wu~\textit{et al.}~\cite{wu2023split} develop a cluster-based parallel SL framework and design a joint MS, device clustering, and subchannel allocation optimization scheme to expedite the model training. Kim~\textit{et al.}~\cite{kim2023bargaining} devise a personalized SL framework that optimizes MS to balance energy consumption, training time, and data privacy, thereby enhancing the efficiency of model training. Lin~\textit{et al.}~\cite{lin2023efficient} propose an efficient parallel SL framework featuring last-layer gradient aggregation and devise a joint MS and resource management strategy to minimize the per-round training latency. Unfortunately, these studies focus on communication-computing latency optimization without taking into account the impact of MS on training convergence.

There are some recent works on the optimization of HSFL~\cite{khan2023joint,xia2022hsfl}. However, these works have not established the convergence analysis of HSFL and studied the optimization of MA and MS to accelerate model convergence. To our knowledge, our recent work~\cite{lin2024adaptsfl} is the only work investigating the joint optimization of MA and MS to improve the training performance of SFL based on theoretical convergence analysis. However, neither the convergence analysis nor the system optimization of SFL can be directly applied to HSFL due to the considerably more intricate impact of MA intervals and MS on the training performance in multi-tier systems.


\section{HSFL Framework}\label{sec_sfl}
In this section, we begin by introducing the system model and then present the proposed HSFL framework.

\subsection{System Model}\label{sys_model}
As illustrated in Fig.~\ref{HSFL}, we consider a typical scenario of HSFL in multi-tier computing systems. Without loss of generality, we consider a hierarchical computing system with $M$ tiers, where different numbers of computing entities are located in each tier. The fed server~\cite{thapa2022splitfed} is responsible for sub-model synchronization, periodically aggregating sub-models from the same tier. The set of participating edge devices is denoted by $\mathcal{N} = \left\{ {1,2,...,N} \right\}$, where $N$ is the number of edge devices. Each edge device $n$ has its own local dataset ${\mathcal{D}_n} = \left\{ {{{\bf{x}}_{n,k}},{y_{n,k}}} \right\}_{k = 1}^{\left| {{{\cal D}_n}} \right|}$, where ${{\bf{x}}_{n,k}}$ and ${{{y}}_{n,k}} $ are the $k$-th input data and its corresponding label. The set of computing entities in the $m$-th tier is denoted by ${\mathcal{J}}_m = \left\{ {1,2,...,J_m} \right\}$, where $J_m$ represent the number of computing entities. In most cases, the top tier $M$ only has one cloud server, i.e., $J_M=1$\footnote{although the ``cloud'' possesses a massive number of servers/GPUs, the inter-cloud communication latency can be ignored compared with wireless/backbone networks, which hence can be regarded as one logical server.}. The set of tiers is denoted by $\mathcal{M}={\{{1, 2, ..., M}\}}$. The sub-model of client $n$ at the $m$-th tier is denoted by ${{\bf{w}}_{m,n}}$. Therefore, the global model is represented as ${\bf{w}} = \left[ {{\bf{w}}_{1,n}}; {{\bf{w}}_{2,n}; ...; {{\bf{w}}_{{M},n}}} \right] $. The total number of layers for the global model is denoted by $L = \sum\limits_{m = 1}^{{M}} {{L_m}}$, where $L_m$ is the number of layers for $m$-th tier sub-models.
The goal of SL is to find the optimal model parameters ${{\bf{w}}^{\bf{*}}}$ by minimizing the following finite-sum non-convex global loss function across all participating devices:
{\begin{equation}\label{minimiaze_loss_function}
\begin{aligned}
\mathop {\min }\limits_{\bf{w}} f\left( {\bf{w}} \right) \buildrel \Delta \over = \mathop {\min }\limits_{\bf{w}} {\frac{{{1}}}{{N}}} \sum\limits_{n = 1}^{{N}} {f_n}({\bf{w}}),
\end{aligned}
\end{equation}}
where ${f_n}\left( {\bf{w}} \right)$ is the local loss function of edge device $n$, i.e., ${f_n}\left( {\bf{w}} \right) \buildrel \Delta \over = {\mathbb{E}_{{\xi _n} \sim {\mathcal{D}_n}}}[{F_n}\left( {{\bf{w}};{\xi _n}} \right)]$, ${F_n}\left( {{\bf{w}};{\xi _n}} \right)$ denotes the stochastic loss function of edge device $n$ for a mini-batch data samples, and $\xi _n$ is the training randomness of local dataset $\mathcal{D}_n$, stemming from stochasticity of the data ordering caused by mini-batch data sampling from $\mathcal{D}_n$. Following the standard stochastic optimization setting~\cite{Karimired2018,yu2019linear,karimireddy2020scaffold}, we assume the existence of unbiased gradient estimators, i.e., $\mathbb{E}_{\xi_{n}^{t}\sim \mathcal{D}_{n}}[\nabla F_{n}(\mathbf{w}^{t-1}_{n};\xi_{n}^{t}) \vert \boldsymbol{\xi}^{[t-1]}] = \nabla f_{n}(\mathbf{w}^{t-1}_{n})$ for any fixed $\mathbf{w}^{t-1}_{n}$, where $\boldsymbol{\xi}^{[t-1]} \overset{\Delta}{=} [\xi_{n}^{\tau}]_{n\in\{1,2,\ldots,N\}, \tau\in\{1,\ldots,t-1\}}$ represents all randomness up to training round $t-1$.

To solve problem~\eqref{minimiaze_loss_function}, the two-tier conventional SFL framework adopts the co-training paradigm of edge devices and an edge server. However, as ML models and the number of clients scale up, the edge server struggles to handle the heavy computing workload from multiple edge devices, leading to intolerable training latency and costs. Motivated by this, we propose the HSFL framework and design the MS and MA scheme for accelerating the training process.

\subsection{The HSFL Framework}\label{subsec_hsfl}
This section elaborates on the workflow of the HSFL framework. The heart of HSFL lies in the optimal control of MS and MA, which enables efficient SFL co-training of diverse computing entities across different tiers.

Before model training begins, the fed server initializes the ML model, partitions it into $M$ sub-models via MS (described in Sec.~\ref{solu_appro}), and determines the optimal sub-model MA intervals for each tier (discussed in Sec.~\ref{solu_appro}). As shown in Fig.~\ref{HSFL}, the HSFL training process consists of two primary stages: split training and sub-model aggregation. 
The split training is executed each training round, while the sub-model aggregation of $m$-th tier occurs every $I_m$ training round. 
For any training round $t \in \mathcal{R} = \left\{ {1,2,...,R} \right\}$, these two stages are detailed as follows.


\textit{A. The split training stage:} The split training stage involves the sub-model update for each tier. During this stage, each edge device/server conducts forward propagation (FP) on its sub-models and sends the generated activations to a higher-tier device/server for further sub-model FP. After sub-model FP reaches the top tier (i.e., the $M$-th tier), each computing entity in the $M$-th tier executes sub-model backward pass (BP) and then transmits the activations' gradients to the lower-tier device/server for performing sub-model BP. This process continues tier by tier until edge devices, at the lowest tier, complete their sub-model BP. 
To better understand this stage, we separate it into five steps and discuss each in detail below.


\textit{a) Sub-model forward propagation:} This step involves FP of all participating computing entities across diverse tiers. For the lowest tier, each edge device $n$ randomly draws a mini-batch ${\mathcal{B}_n} \subseteq {\mathcal{D}_n}$ containing $b$ data samples from its local dataset ${\mathcal{D}_n}$ to execute FP. For higher tiers, the $m$-th tier computing entities utilize activations from connected entities in ($m$-$1$)-th tier, instead of the raw data, for FP. After the sub-model FP is completed, activations are generated at the cut layer. The activations generated by sub-models of client $n$ at the $m$-th tier are represented as
\begin{align}\label{stage_1_1}
{\bf{a}}_{m,n}^t = \left\{ {\begin{array}{*{20}{c}}
{\varphi \left( {{\bf{x}}_n^t;{\bf{w}}_{1,n}^{t - 1}} \right),}&{m = 1}\\
{\varphi \left( {{\bf{a}}_{m-1,n}^t;{\bf{w}}_{m,n}^{t - 1}} \right),}&{\!\!\! 2 \le m \le M}
\end{array}} \right.\!\!, \forall n \in {\mathcal{N}},
\end{align}
where ${\bf{x}}_n^t$ denotes a mini-batch input data of client $n$ in $t$-th training round, and $\varphi\left( {{{x}};{{w}}} \right)$ maps the relationship between input ${{x}}$ and its predicted value given model parameter ${{w}}$.

\textit{b) Activation uploading:} 
After completion of the sub-model FP, each computing entity sends its activations and corresponding labels to the connected entities in the tier above (typically over a wireless channel). These collected activations are then utilized to empower the sub-model FP of higher-tier computing entities.



\textit{c) Sub-model backward pass:} After sub-model FP reaches the top tier (i.e., the $M$-th tier), each computing entity in the $M$-th tier executes sub-model BP based on the loss function value. For lower tiers, the $m$-th tier computing entities leverage activations' gradients from connected entities in ($m$+$1$)-th tier for sub-model BP. Since a higher-tier computing entity may simultaneously handle sub-models of multiple clients, these sub-models can always be synchronized every training round without incurring any communication overhead. 
Consequently, the model parameters of the $j$-th computing entity in the $m$-th tier can be updated through\footnote{The computing entities can execute sub-model updates of multiple clients in either a serial or parallel fashion, which does not affect training performance. Here, we consider the parallel fashion.}
\begin{align}\label{stage_5_2}
{\bf{w}}_m^{j,t} = \frac{1}{{{N^j_m}}}\sum\limits_{n \in {{\mathcal{N}}^j_{m}}} {{\bf{w}}_{m,n}^t},\forall j \in {\mathcal{J}}_m, \forall m \in {\mathcal{M}}\setminus{\{{M}\}},
\end{align}
where ${{\mathcal{N}}^j_{m}}$ is the set of clients corresponding to the sub-models in the $j$-th computing entity of the $m$-th layer, ${{N}^j_{m}}$ denotes the number of sub-models, ${\bf{w}}_{m,n}^t \leftarrow {\bf{w}}_{m,n}^{t - 1} - \gamma {\nabla _{{{\bf{w}}_m}}}{F_n}({\bf{w}}_{m,n}^{t - 1};\xi _n^t)$ is the $m$-th tier sub-model of client $n$, {${\nabla _{\bf{w}}}F({\bf{w}}; \xi)$ represents the gradient of loss function $F({\bf{w}}; \xi)$ with respect to the model parameter ${\bf{w}}$, and $\gamma$ is the learning rate. After sub-model BP is completed, activations' gradients are generated at the cut
layer.

}

\textit{d) Downloading of activations' gradients:}  
After completion of the sub-model BP, each computing entity sends its activations' gradients and corresponding labels to the connected entities in the tier below. The lower-tier entities then use the received activations' gradients to conduct the sub-model BP.


\textit{B. The sub-model aggregation stage:} The sub-model aggregation stage focuses on aggregating sub-models from each tier on the fed server. The sub-model MA intervals vary across diverse tiers (i.e., sub-models from  $m$-th tier are aggregated every $I_m$ training rounds). For every $I_m$ training round, computing entities in the $m$-th tier send their sub-models to the fed server. In practice, the fed server for multi-tier HSFL systems is an application server responsible for model aggregation for a relatively large geographic region, which, hence, is often a cloud server over the Internet. 
Note that $I_m$ is not fixed throughout the training process and can be adjusted after each sub-model MA based on current wireless/wired network conditions and training states (shown in Sec.~\ref{solu_appro}). The sub-model aggregation stage consists of the following three steps.

\textit{e) Sub-model uploading:} In this step, computing entities in the same tier simultaneously send their respective sub-models to the fed server over the wireless/wired links.

\begin{algorithm}[t!]
	\renewcommand{\algorithmicrequire}{\textbf{Input:}}
	\renewcommand{\algorithmicensure}{\textbf{Output:}}
	\caption{HSFL Training Procedure}\label{HSFL_procedure}
	\begin{algorithmic}[1]
 \REQUIRE   $b$, $\gamma$, $E$, ${\cal N}_m$, and ${\cal{D}}_n$, ${\cal M} = \{ 1,2,...,M - 1\}$.
		\ENSURE ${{\bf{w}}^{\bf{*}}}$. 
		\STATE Initialization: ${{\bf{w}}^{{0}}}$, ${{\bf{w}}_n^{{0}}} \leftarrow {{\bf{w}}^{{0}}}$, $I^0 \leftarrow 1$, $\tau \leftarrow 0$, $\rho  \leftarrow 0$, and ${\cal M}' \leftarrow {\cal M}$.
          \FOR {$t=1, 2,..., R$} 
          \FOR {$m=1, 2,..., M-1$}
          \STATE
          \STATE \textbf{/** {Runs} {on} {computing} {entities} **/}
          \FORALL {sub-model ${{\bf{w}}^{t-1}_{m,n}}$ in parallel}
            \STATE  ${\bf{a}}_{m,n}^t = \left\{ {\begin{array}{*{20}{c}}
            {\varphi \left( {{\bf{x}}_n^t;{\bf{w}}_{1,n}^{t - 1}} \right),}&{m = 1}\\
            {\varphi \left( {{\bf{a}}_{m-1,n}^t;{\bf{w}}_{m,n}^{t - 1}} \right),}&{2 \le m \le M}
            \end{array}} \right.$
            \STATE Send $\left( {{{\bf{a}}^t_{m,n}},{\bf{y}}_{m,n}^t} \right)$ to the ($m$+$1$)-th tier connected entities
          \ENDFOR
          \ENDFOR

	   \STATE
          \FOR {$m=M, M-1,..., 1$}
          \FORALL {sub-model ${{\bf{w}}^{t-1}_{m,n}}$ in parallel}
          \STATE ${\bf{w}}_{m,n}^t \leftarrow {\bf{w}}_{m,n}^{t - 1} - \gamma {\nabla _{{{\bf{w}}_m}}}{F_n}({\bf{w}}_{m,n}^{t - 1};\xi _n^t)$ 
          \STATE Send activations' gradients  to the ($m$-$1$)-th tier connected entities
          \ENDFOR
          \FOR {$j=1, 2, ..., J_m$}
          \STATE ${\bf{w}}_m^{j,t} = \frac{1}{{{N^j_m}}}\!\!\sum\limits_{n \in {{\mathcal{N}}^j_{m}}}\!\!\! {{\bf{w}}_{m,n}^t}$
          \ENDFOR
          \ENDFOR
        \STATE
        \STATE \textbf{/** {Runs} {on} the {fed} {server} **/}
        \IF{${\cal M}' = \emptyset$}
        \STATE  Determine ${\bf{I}} = [I_1^{{\tau _1}},I_2^{{\tau _2}},...,I_{M - 1}^{{\tau _{M - 1}}}]$ and ${\boldsymbol{\mu}}$ based on~\textbf{Algorithm~\ref{BCD-based}}
        \STATE $\boldsymbol{\tau}  = \boldsymbol{\tau}  + \bf{1}$ $(\boldsymbol{\tau}=[{\tau _1},{\tau _2},...,{\tau _{M - 1}}])$, ${\cal M}' \leftarrow {\cal M}$
     \ELSIF{$( {t - \!\!\!\sum\limits_{\tau  = 1}^{{\tau _m}-1} {I_m^\tau } } )$ mod $I_m^{\tau_m}$ $=0$}
     \STATE ${\bf{w}}_m^t = \frac{{N_m^j}}{N}\sum\limits_{j = 1}^{{J_m}} {{\bf{w}}_m^{j,t}}$, ${\cal M}' = {\cal M}' \setminus \{m\}$
     \STATE Determine $I_m^{{\tau_m}+1}$ based on~\textbf{Theorem 2}
     \STATE ${\tau_m} \leftarrow {\tau_m}+1$
        \ENDIF
        \STATE $t  \leftarrow t+1$
        \ENDFOR
                
	\end{algorithmic}  
\end{algorithm}

\textit{f) Sub-model aggregation:} The fed server aggregates the sub-models of different computing entities from the same layer. For the $m$-th tier, the aggregated sub-model is denoted by
\begin{align}\label{h_c_define}
{\bf{\overline w}}_m^t = \sum\limits_{j = 1}^{{J_m}} \frac{{N_m^j}}{N} {{\bf{w}}_m^{j,t}}, \;\;\; \forall m \in {\mathcal{M}}\setminus{\{{M}\}}.
\end{align}

\textit{g) Sub-model downloading:} After completing the sub-model aggregation, the fed server sends the updated sub-model to the corresponding edge devices/server.

The HSFL training procedure is outlined in \textbf{Algorithm~\ref{HSFL_procedure}}.

\section{Convergence Analysis of the HSFL Framework}\label{convergence_HSFL}

In this section, we conduct the convergence analysis of HSFL to quantify the impact of MS and MA on training convergence, which serves as the theoretical foundation for developing an efficient iterative optimization method in Sec.~\ref{solu_appro}.

In line with seminal works in distributed stochastic optimization~\cite{zhang2012communication,lian2017can,mania2017perturbed,lin2018don}, we focus on the convergence analysis of the aggregated version of individual solutions ${\bf{\overline w}} = [{\bf{\overline w}}_1; {\bf{\overline w}}_2; ...; {\bf{\overline w}}_M ]$, where ${\bf{\overline w}}_m$ denotes the aggregated sub-model of $m$-th tier. 
For notational simplicity, we denote the gradient of the $m$-th tier of client $n$ as ${\bf{g}}_{m,n}^t = {{\nabla _{{{\bf{w}}_m}}}{F_n}({\bf{w}}_{m,n}^{t - 1};\xi _n^t)}$.
To analyze the convergence rate of HSFL, we consider the following two standard assumptions on the loss functions:
\begin{assumption}
[\textit{Smoothness}] \label{asp:1} \textit{Each local loss function ${f_n}\left( {\bf{w}} \right)$ is differentiable and $\beta $-smooth., i.e., for any $\mathbf{w}$ and ${{\bf{w'}}}$, we have}
{ \begin{equation}
\begin{aligned}
\left\| {\nabla_{\bf{ w}} {f_n}\left( {\bf{w}} \right) - \nabla_{\bf{w}} {f_n}\left( {\bf{w'}} \right)} \right\| \le \beta \left\| {{\bf{w}} - {\bf{w'}}} \right\|,\; n \in \mathcal{N}.
\end{aligned}
\end{equation}}
\end{assumption}
\begin{assumption}
[\textit{Bounded variances and second moments}] \label{asp:2} \textit{The variance and second moments of stochastic gradients for each layer have upper bounds, i.e., }
{ \begin{equation}
\begin{aligned}
\mathbb{E}_{\xi_{n}\sim \mathcal{D}_{n}} \Vert \nabla_{{\bf{w}}_l} F_{n}({\mathbf{w}}_l; \xi_{n}) \!-\! \nabla_{{\bf{w}}_l} f_{n}({\mathbf{w}}_l)\Vert^{2} \!\leq \! {\sigma _l^2}, \forall {\mathbf{w}}_l, n \in \mathcal{N},
\end{aligned}
\end{equation}}
{ \begin{equation}
\begin{aligned}
\mathbb{E}_{\xi_{n}\sim \mathcal{D}_{n}} \Vert \nabla_{{\mathbf{w}}_l} F_{n}({\mathbf{w}}_l; \xi_{n}) \Vert^{2} \leq {G _l^2}, \; \forall {\mathbf{w}}_l, n \in \mathcal{N},
\end{aligned}
\end{equation}}
\textit{where ${\bf w}_l$ is the $l$-th layer of model $\bf w$ and ${\sigma _l^2}$ and ${G_l^2}$ denote the bounded variance and second order moments of ${\bf w}_l$.}  \end{assumption}

\begin{lemma} \label{lm:diff-avg-per-node}
Under {\bf{Assumption \ref{asp:1}}, \bf{Algorithm \ref{HSFL_procedure}}} ensures
\begin{align*}
\mathbb{E} [\Vert {\mathbf{\overline w}}_m^{t} - \mathbf{w}^{t}_{m,n}\Vert^{2}] \leq {\mathbbm{1}}_{\{I_m > 1\}} 4\gamma^{2} I_m^{2} \sum\limits_{l = L_{m-1}+1}^{L_{m-1}+L_m}  {G _l^2}, \forall n, \forall t,
\end{align*}
\textit{where ${\mathbbm{1}}_{\{\cdot\}}$ denotes the indicator function, $I_m$ represents model aggregation interval of $m$-th tier and ${\mathbf{\overline w}}_m^{t}$ is defined in {Eqn. \eqref{h_c_define}}.}
\end{lemma}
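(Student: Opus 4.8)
\textbf{Proof proposal for Lemma~\ref{lm:diff-avg-per-node}.}

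The plan is to bound the deviation between the aggregated $m$-th tier sub-model ${\mathbf{\overline w}}_m^t$ and an individual client's sub-model $\mathbf{w}_{m,n}^t$ by tracking how far both have drifted since the last synchronization of tier $m$. First, I would handle the trivial case: if $I_m = 1$, then tier $m$ is aggregated every round, so ${\mathbf{\overline w}}_m^t = \mathbf{w}_{m,n}^t$ for all $n$ (each client starts the round from the common aggregated model), making the left-hand side zero and the indicator switch it off. For $I_m > 1$, let $t_0$ denote the most recent round $\le t$ at which tier $m$ was aggregated, so that $t - t_0 \le I_m$ and $\mathbf{w}_{m,n}^{t_0} = {\mathbf{\overline w}}_m^{t_0}$ for every $n$ (all clients share the synchronized model right after aggregation). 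Then I would write both ${\mathbf{\overline w}}_m^t$ and $\mathbf{w}_{m,n}^t$ as the common point $\mathbf{w}_{m,n}^{t_0}$ minus an accumulation of stochastic gradient steps: $\mathbf{w}_{m,n}^t = \mathbf{w}_{m,n}^{t_0} - \gamma \sum_{s=t_0+1}^{t} \mathbf{g}_{m,n}^{s}$, and similarly ${\mathbf{\overline w}}_m^t = \mathbf{w}_{m,n}^{t_0} - \gamma \sum_{s=t_0+1}^{t} \overline{\mathbf{g}}_m^{s}$ where $\overline{\mathbf{g}}_m^{s}$ is the appropriately weighted average of per-client gradients (consistent with the weighted aggregation in Eqn.~\eqref{h_c_define}).

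The key step is then the telescoping difference ${\mathbf{\overline w}}_m^t - \mathbf{w}_{m,n}^t = \gamma \sum_{s=t_0+1}^{t} (\mathbf{g}_{m,n}^{s} - \overline{\mathbf{g}}_m^{s})$. Taking the squared norm and applying Jensen's inequality (or Cauchy--Schwarz) over the at most $I_m$ summands gives $\Vert {\mathbf{\overline w}}_m^t - \mathbf{w}_{m,n}^t \Vert^2 \le \gamma^2 I_m \sum_{s=t_0+1}^{t} \Vert \mathbf{g}_{m,n}^{s} - \overline{\mathbf{g}}_m^{s} \Vert^2$. I would then use $\Vert a - b\Vert^2 \le 2\Vert a\Vert^2 + 2\Vert b\Vert^2$ to split each term, take expectations, and invoke Assumption~\ref{asp:2}: $\mathbb{E}\Vert \mathbf{g}_{m,n}^{s}\Vert^2 \le \sum_{l=L_{m-1}+1}^{L_{m-1}+L_m} G_l^2$ (summing the per-layer second-moment bounds over the layers belonging to tier $m$), and the same bound holds for the convex-combination average $\overline{\mathbf{g}}_m^{s}$ by Jensen. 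This yields $\mathbb{E}\Vert \mathbf{g}_{m,n}^{s} - \overline{\mathbf{g}}_m^{s}\Vert^2 \le 4 \sum_{l=L_{m-1}+1}^{L_{m-1}+L_m} G_l^2$; combined with the outer factor $\gamma^2 I_m$ and the at most $I_m$ terms in the sum, this produces exactly $4\gamma^2 I_m^2 \sum_{l=L_{m-1}+1}^{L_{m-1}+L_m} G_l^2$.

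The main obstacle I anticipate is bookkeeping rather than any deep inequality: precisely defining $t_0$ and verifying $t - t_0 \le I_m$ under the possibly time-varying aggregation schedule (the $I_m^{\tau_m}$ in Algorithm~\ref{HSFL_procedure}), and making sure the weighted average $\overline{\mathbf{g}}_m^{s}$ that reconstructs ${\mathbf{\overline w}}_m^t$ is the correct one so that both expansions share the \emph{same} anchor point $\mathbf{w}_{m,n}^{t_0}$. One subtlety is that different clients may be grouped into different computing entities (via $\mathcal{N}_m^j$) before the fed-server aggregation, but since all intermediate averages are convex combinations of the per-client updates from the common synchronized model, the two-level aggregation collapses to a single convex combination and the Jensen bound still applies cleanly. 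I would also note that the bound is uniform in $n$ and $t$ precisely because the per-layer second-moment bounds $G_l^2$ in Assumption~\ref{asp:2} hold for all $\mathbf{w}_l$ and all $n$, so no extra conditioning is needed.
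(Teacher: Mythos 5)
Your proposal is correct and follows essentially the same route as the paper's proof: anchor both trajectories at the last tier-$m$ synchronization round $t_0$ (with $t-t_0\le I_m$), telescope the accumulated stochastic gradients, apply $\Vert\sum_i \mathbf{z}_i\Vert^2\le k\sum_i\Vert\mathbf{z}_i\Vert^2$ and $\Vert\mathbf{a}-\mathbf{b}\Vert^2\le 2\Vert\mathbf{a}\Vert^2+2\Vert\mathbf{b}\Vert^2$, and invoke the layer-wise second-moment bounds of Assumption~\ref{asp:2} summed over tier $m$'s layers. The only cosmetic difference is that the paper unrolls the entity-level models ${\bf{w}}_m^{j,t}$ (which average gradients within each computing entity every round per Eqn.~\eqref{stage_5_2}) rather than a single client's gradient sum, a subtlety you explicitly flag and correctly observe does not change the bound since the averaged gradients obey the same second-moment bound by Jensen.
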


\begin{proof}
See Appendix A.
\end{proof}

Under the above assumptions, the following theorem holds for the training process of HSFL:

\begin{theorem}
\textit{Supposing that learning rate $\gamma$ satisfies $0 < \gamma \leq \frac{1}{\beta}$, then for all $R\geq 1$, we have }
{ \begin{equation}\label{convergence_bound}
\small \begin{aligned}
&\frac{1}{R}\sum\limits_{t = 1}^R \mathbb{E}  [{\Vert\nabla _{\bf{ w}}}f({{\bf{\overline w}}^{t - 1}})\Vert{^2}]\! \\& \le \frac{2\vartheta}{{\gamma R}}\! +\! \frac{{\beta \gamma \sum\limits_{l = 1}^L {\sigma _l^2} }}{N}
  + 4{\beta ^2}{\gamma ^2}\!\sum\limits_{m = 1}^{M - 1} \!\!{\bigg( {{\mathbbm{1}}_{\{I_m > 1\}} I_m^2 \!\!\sum\limits_{l = {L_{m - 1}} + 1}^{{L_{m - 1}} + {L_m}} \!\! G_l^2} \bigg)},
\end{aligned}
\end{equation}}%
where $\vartheta  = f({{\bf{\overline w}}^0}) - {f^ * }$, $L$ and $f^{\ast}$ represent the total number of global model layers and the optimal objective value of problem~\eqref{minimiaze_loss_function}. 
\end{theorem}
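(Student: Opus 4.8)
The plan is to follow the standard descent-lemma argument for nonconvex SGD, but carefully track the fact that the update applied at round $t$ uses per-client local models $\mathbf{w}_{m,n}^{t-1}$ rather than the aggregated model $\overline{\mathbf{w}}_m^{t-1}$, so the ``noise'' in the gradient has an extra consistency-error term that is controlled by Lemma~\ref{lm:diff-avg-per-node}. First I would write the one-step recursion for the aggregated iterate: since $\overline{\mathbf{w}}_m^{t} = \overline{\mathbf{w}}_m^{t-1} - \gamma \frac{1}{N}\sum_{n} \mathbf{g}_{m,n}^{t}$ (the weighted aggregation in Eqn.~\eqref{h_c_define} collapses to a uniform average over all $N$ clients, and for the top tier $M$ the sub-models are already synchronized every round so consistency error is zero), apply $\beta$-smoothness of $f$ to get
\begin{equation*}
\mathbb{E}[f(\overline{\mathbf{w}}^{t})] \le \mathbb{E}[f(\overline{\mathbf{w}}^{t-1})] - \gamma\, \mathbb{E}\langle \nabla f(\overline{\mathbf{w}}^{t-1}), \tfrac{1}{N}\textstyle\sum_n \nabla f_n(\mathbf{w}_n^{t-1})\rangle + \tfrac{\beta\gamma^2}{2}\,\mathbb{E}\|\tfrac{1}{N}\textstyle\sum_n \mathbf{g}_n^{t}\|^2,
\end{equation*}
using the unbiasedness assumption $\mathbb{E}[\mathbf{g}_n^t \mid \boldsymbol{\xi}^{[t-1]}] = \nabla f_n(\mathbf{w}_n^{t-1})$.

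Next I would handle the two remaining terms. For the second-moment term, split into bias plus variance, $\mathbb{E}\|\frac1N\sum_n \mathbf{g}_n^t\|^2 = \mathbb{E}\|\frac1N\sum_n\nabla f_n(\mathbf{w}_n^{t-1})\|^2 + \mathbb{E}\|\frac1N\sum_n(\mathbf{g}_n^t - \nabla f_n(\mathbf{w}_n^{t-1}))\|^2$, and bound the variance part by $\frac{1}{N^2}\sum_n \sum_l \sigma_l^2 = \frac{\sum_l \sigma_l^2}{N}$ using independence of the mini-batch noise across clients and Assumption~\ref{asp:2}. For the inner-product term, use the polarization identity $-\langle a,b\rangle = \frac12(\|a-b\|^2 - \|a\|^2 - \|b\|^2)$ with $a = \nabla f(\overline{\mathbf{w}}^{t-1})$ and $b = \frac1N\sum_n \nabla f_n(\mathbf{w}_n^{t-1})$; the $\|a\|^2$ term gives the descent, the $\|b\|^2$ term cancels against the bias part of the second-moment term after choosing $\gamma \le 1/\beta$ (so that $\frac{\beta\gamma^2}{2}\|b\|^2 \le \frac{\gamma}{2}\|b\|^2$), and the crucial $\|a-b\|^2$ term is $\|\frac1N\sum_n(\nabla f(\overline{\mathbf{w}}^{t-1}) - \nabla f_n(\mathbf{w}_n^{t-1}))\|^2$. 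Here I would use that $\nabla f(\overline{\mathbf{w}}^{t-1}) = \frac1N\sum_n \nabla f_n(\overline{\mathbf{w}}^{t-1})$, apply Jensen, then $\beta$-smoothness of each $f_n$ to get $\le \frac{\beta^2}{N}\sum_n \|\overline{\mathbf{w}}^{t-1} - \mathbf{w}_n^{t-1}\|^2 = \frac{\beta^2}{N}\sum_n\sum_{m=1}^{M-1}\|\overline{\mathbf{w}}_m^{t-1} - \mathbf{w}_{m,n}^{t-1}\|^2$ (the top tier contributes nothing), and finally invoke Lemma~\ref{lm:diff-avg-per-node} to bound each summand by $\mathbbm{1}_{\{I_m>1\}} 4\gamma^2 I_m^2 \sum_{l=L_{m-1}+1}^{L_{m-1}+L_m} G_l^2$.

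Then I would collect terms: after the cancellations the recursion reads, roughly, $\mathbb{E}[f(\overline{\mathbf{w}}^t)] \le \mathbb{E}[f(\overline{\mathbf{w}}^{t-1})] - \frac{\gamma}{2}\mathbb{E}\|\nabla f(\overline{\mathbf{w}}^{t-1})\|^2 + \frac{\beta\gamma^2}{2}\cdot\frac{\sum_l\sigma_l^2}{N} + 2\beta^2\gamma^3\sum_{m=1}^{M-1}\mathbbm{1}_{\{I_m>1\}}I_m^2\sum_{l=L_{m-1}+1}^{L_{m-1}+L_m}G_l^2$. Summing over $t=1,\dots,R$, telescoping the $f$ terms, using $f(\overline{\mathbf{w}}^R) \ge f^*$, dividing by $\gamma R/2$, and identifying $\vartheta = f(\overline{\mathbf{w}}^0) - f^*$ yields exactly Eqn.~\eqref{convergence_bound}. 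The main obstacle I anticipate is bookkeeping the indexing across tiers and layers correctly — in particular making sure the layer ranges $\{L_{m-1}+1,\dots,L_{m-1}+L_m\}$ match up, that the top tier $M$ is consistently excluded from the consistency-error sum (because $J_M=1$ / synchronized every round), and that the weighted average $\sum_j \frac{N_m^j}{N}\mathbf{w}_m^{j,t}$ genuinely reduces to the uniform client average $\frac1N\sum_n \mathbf{w}_{m,n}^t$ so that the unbiasedness and variance bounds apply cleanly; the analytic steps themselves are routine once Lemma~\ref{lm:diff-avg-per-node} is in hand.
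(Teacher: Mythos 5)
Your proposal is correct and follows essentially the same route as the paper's Appendix B: descent lemma from $\beta$-smoothness, bias–variance split of the second-moment term giving $\frac{\gamma^2\sum_l\sigma_l^2}{N}$, the polarization identity on the inner product with the $\|b\|^2$ term absorbed via $\gamma\le 1/\beta$, and the consistency error $\|\nabla f(\overline{\mathbf{w}}^{t-1})-\frac{1}{N}\sum_n\nabla f_n(\mathbf{w}_n^{t-1})\|^2$ bounded tier-by-tier through Jensen, smoothness, and Lemma 1 (with tier $M$ vanishing since $I_M=1$). The per-step constant $2\beta^2\gamma^3$ and the final $4\beta^2\gamma^2$ after telescoping and dividing by $\gamma R/2$ match the paper exactly.
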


\begin{proof}
See Appendix B.
\end{proof}
Substituting Eqn.~\eqref{convergence_bound} into Eqn.~\eqref{accuracy_cons_corollary} yields {\bf{Corollary 1}}, yielding a lower bound on the number of training rounds to achieve target convergence accuracy.
\begin{corollary}\label{theorem1}
The number of {training rounds $R$} for achieving target convergence accuracy $\varepsilon$, i.e., satisfying
{ \small \begin{equation}\label{accuracy_cons_corollary}
\frac{1}{R} \sum_{t=1}^{R} \mathbb{E} [\Vert \nabla_{\bf{w}} f({\mathbf{\overline w}}^{t-1})\Vert^{2}] \le \varepsilon,
\end{equation}}
is given by
{ \small \begin{equation}\label{lowest_com_num}
R  \ge \frac{2 \vartheta}{{\gamma \bigg( {\varepsilon  - \frac{{\beta \gamma \sum\limits_{l = 1}^L {\sigma _l^2}}}{N} - 4{\beta ^2}{\gamma ^2}\!\sum\limits_{m = 1}^{M - 1}\!\! {\bigg( {{\mathbbm{1}}_{\{I_m > 1\}} I_m^2\!\! \sum\limits_{l = {L_{m - 1}} + 1}^{{L_{m - 1}} + {L_m}}  \!\!G_l^2}} \bigg)}\bigg)}}.
\end{equation}}
\end{corollary}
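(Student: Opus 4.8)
The plan is to derive Corollary 1 directly from Theorem 1 by a short algebraic manipulation of the convergence bound~\eqref{convergence_bound}. The key observation is that the left-hand side of the accuracy requirement~\eqref{accuracy_cons_corollary} is exactly the quantity that Theorem 1 upper-bounds; consequently, a \emph{sufficient} condition for~\eqref{accuracy_cons_corollary} to hold is that the right-hand side of~\eqref{convergence_bound} itself be at most $\varepsilon$. Thus it suffices to choose $R$ large enough that
\begin{equation*}
\frac{2\vartheta}{\gamma R} + \frac{\beta\gamma\sum\limits_{l=1}^L \sigma_l^2}{N} + 4\beta^2\gamma^2 \sum\limits_{m=1}^{M-1} \bigg( {\mathbbm{1}}_{\{I_m>1\}} I_m^2 \sum\limits_{l=L_{m-1}+1}^{L_{m-1}+L_m} G_l^2 \bigg) \le \varepsilon .
\end{equation*}

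First I would isolate the only $R$-dependent term, $2\vartheta/(\gamma R)$, moving the two remaining terms to the right-hand side to obtain $2\vartheta/(\gamma R) \le \varepsilon - \frac{\beta\gamma\sum_{l=1}^L\sigma_l^2}{N} - 4\beta^2\gamma^2\sum_{m=1}^{M-1}({\mathbbm{1}}_{\{I_m>1\}} I_m^2 \sum_l G_l^2)$. Next, since $2\vartheta/(\gamma R)$ is positive and strictly decreasing in $R$, I would invert this inequality to solve for $R$, which yields precisely the bound~\eqref{lowest_com_num}, valid whenever the right-hand side of the preceding inequality is strictly positive. The learning-rate restriction $0<\gamma\le 1/\beta$ inherited from Theorem 1 remains in force, and no new assumptions are needed.

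The main obstacle here is not technical but a matter of stating the hypotheses correctly: the denominator in~\eqref{lowest_com_num} is positive only if $\gamma$ and the aggregation intervals $\{I_m\}_{m=1}^{M-1}$ are configured so that $\frac{\beta\gamma\sum_{l=1}^L\sigma_l^2}{N} + 4\beta^2\gamma^2\sum_{m=1}^{M-1}({\mathbbm{1}}_{\{I_m>1\}} I_m^2 \sum_l G_l^2) < \varepsilon$; otherwise the target accuracy $\varepsilon$ is simply not attainable for that configuration and the bound is vacuous. One should also flag that the round count obtained is \emph{sufficient} rather than tight, since forcing the upper bound in~\eqref{convergence_bound} below $\varepsilon$ is stronger than~\eqref{accuracy_cons_corollary} itself. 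With these caveats recorded, the corollary is immediate from the one-line rearrangement above; the resulting closed form for $R$ is what will later be substituted into the training-latency objective in Sec.~\ref{solu_appro} to make the joint MS/MA problem tractable.
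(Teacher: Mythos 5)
Your proposal matches the paper's own derivation: the paper obtains Corollary 1 simply by substituting the bound of Theorem 1 into the accuracy requirement~\eqref{accuracy_cons_corollary} and rearranging for $R$, exactly as you do. Your added caveats (positivity of the denominator and the sufficiency, rather than tightness, of the resulting round count) are correct and, if anything, make the statement more precise than the paper's one-line justification.
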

{\textbf{Insight 1}:} Eqn.~\eqref{lowest_com_num} shows that the number of training rounds required to achieve a target convergence accuracy $\varepsilon$ decreases with shortening sub-model aggregation interval $I_m$. This indicates that more frequent sub-model aggregation leads to faster model convergence. Similarly, for a fixed number of training rounds $R$, decreasing sub-model aggregation interval results in higher convergence accuracy (i.e., smaller $\varepsilon$). 

{\textbf{Insight 2}}: The cut layer selection $L_m$ affects the impact of aggregation interval $I_m$ on the model convergence, as it determines which layers to be aggregated. Moreover, considering a tier with only one server at tier $m$, i.e., $J_m=1$, the last term for the $m$-th tier disappears, which implies faster convergence. This is because that a single server makes the neural network layers always co-located and synchronized.

The above observations are consistent with the experimental results shown in Fig.~\ref{fig:motivation_1}. Since sub-model aggregation involves exchanging models between the fed server and computing entities, training performance improvement comes at the cost of increased communication overhead, leading to longer latency per training round. Therefore,  optimizing MS and sub-model MA under communication and computing constraints is essential for expediting SFL.

\section{Problem Formulation}\label{prob_formu}

In this section, we formulate a joint MA and MS optimization problem based on the derived convergence bound in Sec.~\ref{convergence_HSFL}. The objective is to minimize the training latency of HSFL to achieve target learning performance in a resource-constrained multi-tier computing system. Subsequently, we devise an efficient MS and MA strategy in Sec.~\ref{solu_appro}. For clarity, the decision variables and definitions are listed below.

\begin{itemize}
\item $\bf I$: $I_m \in {\mathbb{N}^ + }$ is the sub-model MA decision variable, indicating that sub-models at $m$-th tier are aggregated on the fed server every $I_m$ training rounds. ${\bf I}  = \left[ {{I _{1}},{I_{2}},...,{I_{M-1}}} \right]$ represents the collection of sub-model MA decisions.
\item ${\boldsymbol\mu}$: $\mu_{m,l} \in \left\{ {0,1} \right\}$ denotes the MS decision variable, where $\mu_{m,l}=1$ indicates that the $l$-th neural network layer is selected as the cut layer between $m$-th and ($m$+1)-th tier, and 0 otherwise.
${\boldsymbol\mu}  = \left[ {{\mu _{1,1}},{\mu _{1,2}},...,{\mu _{M-1,L}}} \right]$ represents the collection of MS decisions.
\end{itemize}

\subsection{Training Latency Analysis}
In this section, we analyze the training latency of HSFL. Without loss of generality, we focus on one training round for analysis. In each training round, edge device $n$ randomly draws a mini-batch ${\mathcal{B}_n} \subseteq {\mathcal{D}_n}$ with $b$ data samples from its local dataset for model training. To begin, we provide a detailed latency analysis of the split training stage.

\textit{a) Sub-model forward propagation latency:} The computing entities in each tier utilize the local dataset/received activations from the lower tier to conduct their sub-model FP. A computing entity can execute FP and BP for multiple edge devices in a serial or parallel fashion. Here, we consider the parallel fashion. The computing workload (in FLOPs) of sub-model FP in the $m$-th tier per mini-batch $b$ is denoted by $\Phi _{m}^F\left( {b, \boldsymbol{\mu}} \right) = \sum\limits_{l = 1}^L {\left( {{\mu _{m,l}} - {\mu _{m - 1,l}}} \right)} {\rho _l(b)}$, where ${\rho _l(b)}$ is the FP computing workload of propagating the first $l$ layer of neural network for one mini-batch $b$. Thus, the FP latency for client $n$'s sub-model located in the $m$-th tier is given by 
\begin{align}\label{client_FP_latency}
T_{m,n}^F = \frac{{{\kern 1pt} \Phi _{m}^F\left( {b, \boldsymbol{\mu }} \right)}}{{{f_{m,n}}}},\;\;\;\;\;\forall n \in \mathcal{N}, \forall m \in {\mathcal{M}},
\end{align}
where ${{ f_{m,n}}}$ denotes the computing capability allocated for processing the sub-model of client $n$ located in the $m$-th tier\footnote{We assume the association of clients and computing entities as well as resource allocation in each tier are predetermined. Thus, we can obtain the computing capabilities and data rate allocated for a client's sub-model in each tier and omit the index of the specific computing entity hosting the sub-model.} (namely, the number of float-point operations per second (FLOPS)).

\textit{b) Activation uploading latency:}
After the completion of sub-model FP, computing entities send the activations generated at the cut layer to those at the higher tier to continue model training. The data size (in bits) of activations of $m$-th tier is represented as  ${\Gamma ^A_{m}}\left( {\boldsymbol{\mu }} \right) = \sum\limits_{l = 1}^L {{\mu _{m,l}}} {\psi _l}$, where ${\psi _l}$ denotes the data size of activations at the cut layer $l$. The activation transmission latency for  client $n$'s sub-model
located in the $m$-th tier can be calculated as
\begin{align}\label{smashed_trans_latency}
T_{m,n}^{A} = \frac{{b\Gamma^A_{m} \left( \boldsymbol{\mu}  \right)}}{{r_{m,n}^{A}}},\;\;\;\;\;\forall n \in \mathcal{N}, \forall m \in {\mathcal{M}}\setminus{\{{M}\}},
\end{align}
where ${r_{m,n}^{A}}$ is the uplink transmission rate for transmitting the activations corresponding to client $n$'s sub-model in the $m$-th tier to the corresponding upstream computing entity at ($m+1$)-th tier.

\textit{c) Sub-model backward pass latency:}
After sub-model FP reaches the top tier (i.e., the $M$-th tier), computing entities at diverse tiers execute sub-model BP based on loss function value/received activations' gradients from the higher tier. Let $\Phi _{m}^B\left( {b, \boldsymbol{\mu}} \right) = \sum\limits_{l = 1}^L {\left( {{\mu _{m,l}} - {\mu _{m - 1,l}}} \right)} {\varpi _l(b)}$ denote the computing workload of sub-model BP of $m$-th tier per mini-batch $b$, where ${\varpi _l(b)}$ is the BP computing workload of propagating the first $l$ layer of neural network for one mini-batch $b$. Thus, the BP latency for  client $n$'s sub-model located in the $m$-th tier can be obtained from
\begin{align}\label{server_BP_latency}
T_{m,n}^B = \frac{{{\kern 1pt} \Phi _{m}^B\left( {b, \boldsymbol{\mu }} \right)}}{{{f_{m,n}}}},\;\;\;\;\;\forall n \in \mathcal{N}, \forall m \in {\mathcal{M}}.
\end{align}

\textit{d) Downloading latency of activations' gradients:}
After the sub-model BP is completed, computing entities transmit the activations' gradients to those at a lower tier for further model training. Let $\Gamma^G_{m} \left( \boldsymbol{\mu}  \right) = \sum\limits_{l = 1}^{L} {{\mu _{m,l}}} {\chi _l}$ represent the data size of activations' gradients of the $m$-th tier, where ${\chi _l}$ denotes the data size of activations' gradients at cut layer $l$. Therefore, the transmission latency of activations' gradients for  client $n$'s sub-model  at the ($m$+1) tier is expressed as
\begin{align}\label{downlink_latency}
T_{m,n}^{G} = \frac{{b\Gamma^G_{m} \left( \boldsymbol{\mu}  \right)}}{{r_{m,n}^{G}}},\;\;\;\;\;\forall n \in {\cal N}, \forall m \in {\mathcal{M}}\setminus{\{{M}\}},
\end{align}
where ${r_{m,n}^{G}}$ is the downlink transmission rate for transmitting the activations' gradients corresponding to  client $n$'s sub-model at the ($m$+1) tier to the corresponding downstream computing entity in the $m$-th tier.

Next, we analyze the latency of the sub-model aggregation.

\textit{e) Sub-model uploading latency:} 
The computing entities at the same tier send their aggregated sub-models\footnote{For each computing entity, it aggregates the sub-models of its hosted clients before uploading to the fed server. The aggregation latency is ignored.} to the fed server for MA. Let $\Lambda_{m} \left( \boldsymbol{\mu}  \right) =  \sum\limits_{l = 1}^L {\left( {{\mu _{m,l}} - {\mu _{m - 1,l}}} \right)} {\delta _l}$ denote the data size of sub-model of the $m$-thtier, where ${\delta _l}$ is the data size of the sub-model with the first $l$ layers. Therefore, the sub-model uploading latency of the $j$-th computing entity located in the $m$-th tier is expressed as 
{\begin{align}\label{smashed_trans_latency}
T_{m}^{j,U} \!=\!{\mathbbm{1}}_{\{J_m > 1\}} \frac{{{\Lambda _m}\left( {\boldsymbol{\mu }} \right)}}{{r_{m}^{j,U}}},\;\;\; \forall j \in {\mathcal{J}}_m,  \forall m \in {\mathcal{M}}\setminus{\{{M}\}},
\end{align}}
where $r_{m}^{j,U}$ is the uplink data rate for transferring sub-model from the $j$-th computing entity located in the $m$-th tier to the fed server, and ${\mathbbm{1}}_{\{J_m > 1\}}$ implies sub-model uploading for aggregation is needed only when there is more than one server in the $m$-th tier.

\textit{f) Sub-model model aggregation:} The fed server aggregates the received sub-models from the same tier. For simplicity, the sub-model aggregation latency for this part is ignored, as it is negligible compared to other steps~\cite{shi2020joint,xia2021federated}.

\textit{g) Sub-model downloading latency:} After completing sub-model MA, the fed server sends the updated sub-model to the corresponding computing entities. Similarly, the sub-model downlink transmission latency of the $j$-th computing entity located in the $m$-th tier is calculated by 
\begin{align}\label{smashed_trans_latency}
T_{m}^{j,D} = {\mathbbm{1}}_{\{J_m > 1\}} \frac{{{\Lambda _{m}}\left( {\boldsymbol{\mu }} \right)}}{{r_{m}^{j,D}}},\; \forall j \in {\mathcal{J}}_m, \forall m \in {\mathcal{M}}\setminus{\{{M}\}},
\end{align}
where $r_{m}^{j, D}$ is the downlink rate for transmitting sub-model from the fed server to $j$-th computing entity located in the $m$-th tier.

\subsection{Joint Model Aggregation and Model Splitting Problem Formulation}

\begin{figure}[t]
    \setlength\abovecaptionskip{6pt}
     \setlength\subfigcapskip{0pt}
     \centering
\subfigure[Split training.]{
    \includegraphics[width=6.08cm, height=1.85cm]{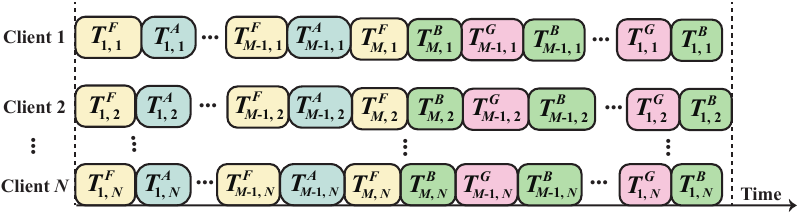}
    \label{sfig:split_training}
}
\subfigure[Sub-model aggregation for tier $m$.]{
    \includegraphics[width=2.14cm, height=1.85cm]{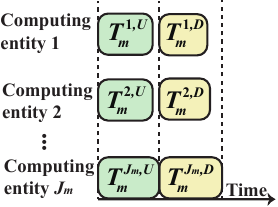}
    \label{sfig:client_side_aggrega}
}
    \caption{An illustration of split training and sub-model aggregation stages.}
    \label{fig:training_2_stage}
\end{figure}

In this section, we formulate the joint MA and MS optimization problem to minimize the training latency of HSFL for model convergence in a resource-constrained multi-tier system. As illustrated in Fig.~\ref{fig:training_2_stage}, the per-round latency of split training is calculated as 
{{ \begin{align}
T_S\!\left( {\boldsymbol{\mu} } \right) \!=\!\mathop {\max }\limits_n \bigg\{ {\! \sum\limits_{m = 1}^M {T_{m,n}^F \!+\! \!\sum\limits_{m = 1}^{M - 1} {T_{m,n}^A \!+ \!\!\sum\limits_{m = 1}^M {T_{m,n}^B \!+\! \!\!\sum\limits_{m = 1}^{M - 1} {T_{m,n}^G} \!} } } } \bigg\}, 
\end{align}}}
and MA latency of the $m$-th tier sub-models is expressed as 
{{ \begin{align}
T_{m,A}\!\left( {\boldsymbol{\mu} } \right) \!={\mathop {\max }\limits_j \left\{ {T_{m}^{j,U}} \right\} + \mathop {\max }\limits_j \left\{ {T_{m}^{j,D}} \right\}}. 
\end{align}}}

Considering the split training is executed per training round and the sub-model aggregation for $m$-tier occurs every $I_m$ training round, the total training latency 
for $R$ training rounds is given by
{{ \begin{align}\label{total_latency}
T\!\left( {{\bf I},\boldsymbol{\mu} } \right) =RT_S\!\left( {\boldsymbol{\mu} } \right) + \sum\limits_{m = 1}^{M - 1} {\left\lfloor {\frac{R}{{{I_m}}}} \right\rfloor T_{m,A}\!\left( {\boldsymbol{\mu} } \right)}. 
\end{align}}}

As alluded in Sec.~\ref{Intro},  MA balances the trade-off between communication overhead and training convergence, while MS significantly impacts communication-computing overhead and model convergence. Therefore, jointly optimizing MA and MS is critical for the performance of HSFL. To this end, we formulate the following optimization problem to minimize the training latency for model convergence:
{\begin{align}\label{time_minimize_problem}
\mathcal{P}:&\mathop {{\rm{min}}}\limits_{{\bf I},{\boldsymbol{\mu}}}  T({\bf I},{\boldsymbol{\mu }})   \\
&\mathrm{s.t.} ~\mathrm{C1:}~\frac{1}{R} \sum_{t=1}^{R} \mathbb{E} [\Vert \nabla_{\bf{w}} f({\mathbf{\overline w}}^{t-1})\Vert^{2}] \le \varepsilon, \nonumber \\
&~\mathrm{C2:}~{\mu _{m,l}} \in \left\{ {0,1} \right\}, \quad \forall m \in {\mathcal{M}}\setminus{\{{M}\}}, l = 1,2,...,L, \nonumber\\
&~\mathrm{C3:}~\sum\limits_{l = 1}^{L} {{\mu _{m,l}}}  = 1, \quad \forall m \in {\mathcal{M}}\setminus{\{{M}\}}, \nonumber \\
&~\mathrm{C4:}~\sum\limits_{l = 1}^\ell  {{\mu _{m,l}}}  \le \sum\limits_{l = 1}^\ell  {{\mu _{m - 1, l}}}, \quad  \ell=1,2,...,L, \nonumber \\
&~\mathrm{C5:}~N^j_m \sum\limits_{l = 1}^L {\left( {{\mu _{m,l}} - {\mu _{m - 1,l}}} \right)} \big( {{{\widetilde \psi} _l} +{{\widetilde \chi} _l}+{{\widetilde \vartheta} _l} + {\delta _l}} \big) < {\rho^j_{m}}, \nonumber \\& \quad \quad \quad \!\!\!  \forall j \in {\mathcal{J}}_m, \forall m \in {\mathcal{M}}\setminus{\{{M}\}}, \nonumber \\
&~\mathrm{C6:}~I_m \in {\mathbb{N}^ + }, \quad \forall m \in {\mathcal{M}}\setminus{\{{M}\}}, \nonumber
\end{align}}
where ${{\widetilde \psi} _l} = \sum\limits_{k = 1}^l {{\psi _k}} $ and ${{\widetilde \chi} _l} = \sum\limits_{k = 1}^l {{{\chi}_k}} $ represent the cumulative sum of data size (in bits) of activations and activations' gradients for first $l$ layers of the neural network, ${{\widetilde \vartheta} _l}$ is the data size of the optimizer state for the first $l$ layers of the neural network, depending on the choice of the optimizer (e.g. SGD, Momentum, and Adam), ${\rho^j_{m}}$ denotes the memory limitation of the $j$-th computing entity in the $m$-th tier. Constrain $\mathrm C1$ guarantees model convergence accuracy; $\mathrm C2$ and $\mathrm C3$ ensure the uniqueness of the cut layer between sub-models of $m$-th and ($m$+1)-th tier; $\mathrm C4$ guarantees that the cut layer of the lower tier is shallower than that of the higher tier; $\mathrm C5$ represents the memory limitation of computing entities~\cite{yeung2021horus}; $\mathrm C6$ denotes that the sub-model MA decision variable is a positive integer.

Problem~\eqref{time_minimize_problem} is a combinatorial optimization problem with a non-convex mixed-integer non-linear objective function. In general, this problem is NP-hard, rendering it infeasible to obtain the optimal solution using polynomial-time algorithms.

\section{Solution Approach}\label{solu_appro}

In this section, we develop an efficient iterative algorithm by decoupling the problem~\eqref{time_minimize_problem} into MA and MS sub-problems and then find the optimal solution for each.

We first reveal the explicit expression $R$ with {\bf{Corollary 1}}. Since $R$ is proportional to the objective function, the objective function is minimized if and only if inequality~\eqref{lowest_com_num} holds as equality. In general, $R$ can be significantly larger than $I_m$, and therefore we can utilize $\left\lfloor {\frac{R}{{{I_m}}}} \right\rfloor  \approx \frac{R}{{{I_m}}}$ to approximate~\eqref{total_latency}. By substituting~\eqref{lowest_com_num} into~\eqref{total_latency}, problem~\eqref{time_minimize_problem} can be converted into 
\begin{align}\label{problem_1}
\mathcal{P'}:&\mathop {{\rm{min}}}\limits_{{\bf I},{\boldsymbol{\mu}}} \Theta ( {\bf I}, {\boldsymbol{\mu }} )   \\
&\mathrm{s.t.} ~\mathrm{C2}-\mathrm{C6}, \nonumber
\end{align}
where 
{ \small \begin{align} \label{objective_1}
\Theta ( {\bf I}, {\boldsymbol{\mu }} )=\frac{{2\vartheta ({T_S}\left( {{\boldsymbol{\mu }}} \right) + \!\sum\limits_{m = 1}^{M - 1} {\frac{{{T_{m,A}}\left( {{\boldsymbol{\mu }}} \right)}}{{{I_m}}}} )}}{{\gamma (\varepsilon  \!- \!\frac{{\beta \gamma \sum\limits_{l = 1}^L {\sigma _l^2} }}{N} - 4{\beta ^2}{\gamma ^2}\!\sum\limits_{m = 1}^{M - 1}  ({\mathbbm{1}}_{\{I_m > 1\}} I_m^2\!\!\sum\limits_{l = {L_{m - 1}} + 1}^{{L_{m - 1}} + {L_m}} \!\!\! G_l^2))}}.
\end{align}}

The term $\sum\limits_{l = {L_{m - 1}} + 1}^{{L_{m - 1}} + {L_m}} {G_l^2} $ is intertwined with MS, but it does not disclose the relationship with MS decision variables. To address this issue, we introduce a set of constants ${\bf{\widetilde G}} = \left[ {{{\widetilde G}^2_1},{{\widetilde G}^2_2},...,{{\widetilde G}^2_L}} \right]$, where ${\widetilde G}^2_l$ represents the cumulative sum of the bounded second order moments for the first $l$ layers of neural network, defined as ${{\widetilde G}^2_l} = \sum\limits_{k = 1}^l {G_k^2}$. Hence, $\sum\limits_{l = {L_{m - 1}} + 1}^{{L_{m - 1}} + {L_m}} {G_l^2} $ can be reformulated as $\sum\limits_{l = 1}^L {\left( {{\mu _{m,l}} - {\mu _{m - 1,l}}} \right)} {{\widetilde G}_l^2}$. Moreover, the non-convexity and non-smoothness of the objective function of problem~\eqref{problem_1} make it extremely intractable. To linearize the objective function, we introduce a set of auxiliary variables ${\bf{T}} = \left[ {{T_1},{T_{1,2}},T_{2,2}, ..., T_{M-1,2}, {T_{1,3}},T_{2,3}, ..., T_{M-1,3}} \right]$, i.e., $\mathop {\max }\limits_n \{\! \sum\limits_{m = 1}^M {T_{m,n}^F + \sum\limits_{m = 1}^{M - 1} {T_{m,n}^A + \sum\limits_{m = 1}^M {T_{m,n}^B + \sum\limits_{m = 1}^{M - 1} {T_{m,n}^G} } } } \}  \le {T_1}$, ${\mathop {\max }\limits_j \left\{ {T_{m}^{j,U}} \right\}} \le {T_{m,2}}$, and ${\mathop {\max }\limits_j \left\{ {T_{m}^{j,D}} \right\}} \le {T_{m,3}}$. Therefore, problem~\eqref{problem_1} can be transformed into
{\small \begin{align}\label{problem_2}
\mathcal{P''}:&\mathop {{\rm{min}}}\limits_{{\bf I}, {\boldsymbol{\mu}}, \bf{T} } \Theta' ( {\bf I}, {\boldsymbol{\mu }}, \bf{T} )   \\
&\mathrm{s.t.} ~\mathrm{C2}-\mathrm{C6}, \nonumber\\
&~\mathrm{R1:}~\!\!\!\sum\limits_{m = 1}^M \!\frac{{\!\sum\limits_{l = 1}^L \!\!{\left( {{\mu _{m,l}}\! -\! {\mu _{m - 1,l}}} \right)} \!\left( {{\rho _l(b)} + {\varpi _l(b)}} \right)}}{{{f_{m,n}}}} \!+\!\! \sum\limits_{m = 1}^{M - 1} \!\frac{{b\!\!\sum\limits_{l = 1}^L \!{{\mu _{m,l}}} {\psi _l}}}{{r_{m,n}^A}} \nonumber \\&\quad \quad \quad + \sum\limits_{m = 1}^{M - 1} {\frac{{b\sum\limits_{l = 1}^L {{\mu _{m,l}}} {\chi _l}}}{{r_{m,n}^G}} \le {T_1}},    \quad\forall n \in \mathcal{N},  \nonumber\\
&~\mathrm{R2:}~\!{\mathbbm{1}}_{\{J_m > 1\}} \frac{{\sum\limits_{l = 1}^L \!\!{\left( {{\mu _{m,l}}\! -\! {\mu _{m - 1,l}}} \right)} {\delta _l}}}{{r_{m}^{j,U}}}\! \le\! {T_{m,2}}, \; \;\forall j \!\in {\mathcal{J}}_m, \nonumber  \forall m \in \\ &\quad \quad \quad {\mathcal{M}}\setminus{\{{M}\}}, \nonumber\\
&~\mathrm{R3:}~\!{\mathbbm{1}}_{\{J_m > 1\}} \frac{{\sum\limits_{l = 1}^L \!\!{\left( {{\mu _{m,l}}\! -\! {\mu _{m - 1,l}}} \right)} {\delta _l}}}{{r_{m}^{j, D}}}\! \le\! {T_{m,3}}, \; \;\forall j \!\in {\mathcal{J}}_m, \forall m \in \nonumber \\ &\quad \quad \quad {\mathcal{M}}\setminus{\{{M}\}},  \nonumber
\end{align}}
where
{ \scriptsize \begin{align}
\Theta' ( {\bf I}, {\boldsymbol{\mu }}, \!{\bf{T}} )\! = \!\frac{{2\vartheta ({T_1} + \! \sum\limits_{m = 1}^{M - 1} {\frac{{{T_{m,2}} + {T_{m,3}}}}{{{I_m}}}} )}}{{\gamma (\varepsilon \! - \!\frac{{\beta \gamma \!\! \sum\limits_{l = 1}^L \!\!{\sigma _l^2} }}{N} \!- 4{\beta ^2}{\gamma ^2}\!\!\!\sum\limits_{m = 1}^{M - 1} \!\!( {\mathbbm{1}}_{\{I_m > 1\}} I_m^2\!\!\sum\limits_{l = 1}^L\!\! {\left( {{\mu _{m,l}} \!-\! {\mu _{m - 1,l}}} \right)} {\widetilde G}_l^2))}}. 
\end{align}}
 
The difficulty in solving problem~\eqref{problem_2} stems from the tight coupling between auxiliary variables and the original decision variables. Therefore, we decompose the problem~\eqref{problem_2} into two tractable sub-problems based on decision variables and develop efficient algorithms to obtain optimal solutions for each.

We fix the variables $\boldsymbol{\mu}$ and $\bf{T}$ to investigate the sub-problem involving sub-model MA, which is expressed as
\begin{align}\label{subproblem_1}
\mathcal{P}_1:&\mathop {{\rm{min}}}\limits_{{\bf I}} \Theta' ( {\bf I} )  \\
&\mathrm{s.t.} ~\mathrm{C6}. \nonumber
\end{align}

Then, we can derive the following proposition: 
\begin{proposition}\label{theorem2}
By fixing the sub-model MA intervals of tiers ${\mathcal{M}}' \subset {\mathcal{M}}\setminus{\{{M}\}}$ to 1, the optimal sub-model MA intervals (assuming their sub-model MA intervals are larger than 1) for remaining tiers ${\mathcal{M}}'' = {\mathcal{M}}\setminus{\left( {\{ M\}  \cup {\mathcal{M}}'} \right)}$ are given by
{  \begin{align}\label{accuracy_cons}
{{\bf{I}}^*} =  {{\mathrm{argmin}} _{\scriptstyle{\bf{I}} \in \{ \left. {I_{m}} \right|\hfill
\scriptstyle{I_{m}} \in \{ \max (\left\lfloor {{{\hat I}_{m}}} \right\rfloor, \left\lceil {{{\hat I}_{m}}} \right\rceil )\} \} \hfill}}\Theta '\!\!\left( {\bf{I}} \right),
\end{align}}
where  ${\bf{\hat I}} = \{ {{\hat I}_m}\mid m \in {\cal M}''\} $ can be easily obtained by solving $\frac{{\partial \Theta '({\bf{I}})}}{{\partial {\bf{I}}}} = {\bf{0}}$ with Newton-Jacobi  method~\cite{geng2019pipeline}.
\end{proposition}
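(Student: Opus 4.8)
\textbf{Proof proposal for Proposition~\ref{theorem2}.}

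The plan is to analyze the structure of $\Theta'(\mathbf{I})$ as a function of the integer vector $\mathbf{I} = [I_1, I_2, \ldots, I_{M-1}]$ once $\boldsymbol{\mu}$ and $\mathbf{T}$ are held fixed, and to show that the optimal solution (restricted to the tiers in $\mathcal{M}''$ whose intervals exceed $1$) is obtained by rounding the stationary point of the continuous relaxation. First I would fix the tiers in $\mathcal{M}'$ to have $I_m = 1$; for these tiers the indicator ${\mathbbm{1}}_{\{I_m>1\}}$ vanishes, so they contribute only the term $T_{m,2}+T_{m,3}$ (with $I_m=1$ in the denominator) to the numerator and nothing to the correction term in the denominator of $\Theta'$. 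After this reduction, $\Theta'$ depends only on $\mathbf{I}'' = \{I_m \mid m \in \mathcal{M}''\}$, and on that domain every indicator equals $1$, so the objective becomes a genuinely smooth rational function of $\mathbf{I}''$ with no indicator discontinuities to worry about.

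Next I would relax the integrality constraint $\mathrm{C6}$ and treat each $I_m$, $m\in\mathcal{M}''$, as a positive real variable. Writing $\Theta'(\mathbf{I}) = \frac{A + \sum_{m} c_m/I_m}{D - \sum_{m} d_m I_m^2}$ with $A, D, c_m, d_m > 0$ constants determined by the fixed $\boldsymbol{\mu}, \mathbf{T}$ (here $c_m = 2\vartheta(T_{m,2}+T_{m,3})$ and $d_m = 4\beta^2\gamma^3 \sum_{l=1}^{L}(\mu_{m,l}-\mu_{m-1,l})\widetilde G_l^2$, and $A, D$ absorb the tier-$\mathcal{M}'$ and tier-independent pieces), I would compute $\partial \Theta'/\partial I_m$. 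The numerator of this partial derivative, after clearing the common positive denominator $(D - \sum_m d_m I_m^2)^2$, is a smooth expression in $\mathbf{I}''$; setting it to zero for each $m\in\mathcal{M}''$ gives the system $\frac{\partial\Theta'(\mathbf{I})}{\partial\mathbf{I}} = \mathbf{0}$ referenced in the statement, which is solved numerically by the Newton-Jacobi method to yield $\widehat{\mathbf{I}} = \{\widehat I_m\}$. I would then argue that on the feasible region (where $D - \sum_m d_m I_m^2 > 0$, which is exactly constraint $\mathrm{C1}$ keeping the denominator positive), $\Theta'$ is quasi-convex / unimodal along each coordinate: the numerator is decreasing and convex in $I_m$ while the denominator is decreasing and concave in $I_m$, so the ratio has a single interior minimizer; hence the continuous optimum is $\widehat{\mathbf{I}}$.

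Finally I would reinstate integrality. Because each coordinate section $I_m \mapsto \Theta'(\ldots, I_m, \ldots)$ is unimodal with continuous minimizer $\widehat I_m$, the best integer choice for that coordinate must be one of the two nearest integers $\lfloor \widehat I_m \rfloor$ or $\lceil \widehat I_m \rceil$; evaluating $\Theta'$ on the resulting finite candidate set and taking the argmin, as written in Eqn.~\eqref{accuracy_cons}, gives the optimal integer solution. I expect the main obstacle to be making the coordinate-wise unimodality argument fully rigorous in the multivariate setting: one must verify that the cross-terms introduced by $\sum_m d_m I_m^2$ in the shared denominator do not destroy the per-coordinate unimodality, i.e., that for any fixed values of the other coordinates the univariate restriction still has a unique minimizer lying between consecutive integers around $\widehat I_m$. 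This hinges on the sign pattern of $\partial\Theta'/\partial I_m$ being monotone in $I_m$ throughout the feasible box, which should follow from the fact that increasing $I_m$ simultaneously decreases the numerator (good) and decreases the denominator (bad) at rates whose ratio crosses $1$ exactly once; I would make this precise by examining $\partial^2$ or by a direct sign analysis of the derivative's numerator polynomial in $I_m$.
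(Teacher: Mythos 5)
Your proposal follows essentially the same route as the paper's proof: fix the tiers in $\mathcal{M}'$ to interval $1$ so the indicators drop out, relax integrality, locate the stationary point $\widehat{\mathbf{I}}$ of the resulting smooth fractional objective via Newton--Jacobi, establish per-coordinate unimodality by a sign analysis of the derivative's numerator polynomial (the paper's $\Xi(\mathbf{I})$, shown to be increasing in $I_{m'}$, negative at $0$ and positive at infinity), and then restrict to the floor/ceiling candidates. The one caveat you flag --- that coordinate-wise unimodality in the multivariate setting does not automatically confine the integer optimum to the box of nearest integers around $\widehat{\mathbf{I}}$ --- is present in the paper's own argument as well, so your treatment is at the same level of rigor as the original.
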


\begin{proof}
See Appendix C.
\end{proof}
{\textbf{Insights:}} Eqn.~\eqref{E_I} demonstrates that tiers with lower communication overhead for sub-model aggregation (i.e., smaller $b_m$) tend to aggregate sub-models more frequently (i.e., smaller $I_m$), otherwise aggregate models less often. This is because, as indicated by Eqn.~\eqref{lowest_com_num}, 
the slower model convergence caused by infrequent aggregation in one tier can be compensated by shortening the aggregation intervals in other tiers. Therefore, to achieve the target accuracy within the shortest training time, tiers with lower communication overhead for sub-model aggregation should utilize shorter aggregation intervals. In the HSFL system, the fed server typically acts as an application server responsible for model aggregation across a relatively large geographical area, often operating as a cloud server over the Internet. Consequently, there often exists ${b_{m_1}} > {b_{m_2}} > ... > {b_{{m_{|{\cal M}''|}}}}$ with with layer index $m_1 > m_2 > ... > |{\cal M}''|$, yielding ${I_{m_1}} > {I_{m_2}} > ... > {I_{{m_{|{\cal M}''|}}}}$.

To circumvent the difficulty caused by the indicator function ${\mathbbm{1}}_{\{\cdot\}}$, \textbf{Proposition 1} provides the optimal solutions by assuming $I_m>1$. Based on {\bf Proposition 1}, the optimal solution to problem~\eqref{subproblem_1} can be found by exhaustively evaluating all possible combinations by fixing some $I_m$ to $1$. Specifically, the sub-model MA interval for each tier has two potential solutions (i.e., 1 or larger than 1), resulting in $2^M$ combinations. The objective function is calculated for each combination, and the one with the minimum value is identified as the global optimum. In practice, $M$ is typically small (e.g., $M = 3$ in the classical client-edge-cloud architecture), ensuring the computational complexity remains manageable. 

By fixing the decision variable $\bf I$, we transform problem~\eqref{problem_2} into a standard mixed-integer linear fractional  programming (MILFP) with respect to $\boldsymbol{\mu}$ and ${\bf{T}}$, which is given by
\begin{align}\label{subproblem_2}
\mathcal{P}_2:&\mathop {{\rm{min}}}\limits_{{\boldsymbol{\mu}, {\bf{T}}}} \Theta' ( {\boldsymbol{\mu }}, {\bf{T}} )   \\
&\mathrm{s.t.} ~\mathrm{C2}-\mathrm{C5},~\mathrm{R1}-\mathrm{R3}. \nonumber
\end{align}

We leverage the Dinkelbach algorithm~\cite{dinkelbach1967nonlinear} to optimally solve problem~\eqref{subproblem_2} by introducing the fractional parameter to reformulate it as mixed-integer linear programming (MILP)~\cite{yue2013reformulation,rodenas1999extensions}.

As aforementioned, we decompose the original problem~\eqref{time_minimize_problem} into two tractable MA and MS sub-problems and develop efficient algorithms to find optimal solutions for each. Subsequently, we propose a block-coordinate descent (BCD)-based algorithm~\cite{tseng2001convergence} to solve the original problem~\eqref{time_minimize_problem}, as outlined in~\textbf{Algorithm~\ref{BCD-based}}. It is noted that estimation of the key parameters for executing the algorithm (e.g., $\beta$, ${G _l^2}$ and ${\sigma _l^2}$) follows the approach in~\cite{wang2019adaptive}. The algorithm can be re-executed after a certain period of time, particularly when the key parameters and network conditions change significantly.

\section{Implementation}\label{sec:implementa}
In this section, we elaborate on the implementation of HSFL in terms of simulation setup, dataset and model, and benchmarks.

{\bf Experimental setup:} In our experiment, we consider a classical client-edge-cloud three-tier HSFL ($M=3$) consisting of a cloud server, $5$ edge servers, and $20$ edge devices ($J_1=N=20, J_2=5,$ and $ J_3=1$), with each edge server connected to $4$ edge devices. The computing capabilities of the cloud server and edge server are set to $50$ TFLOPS and $5$ TFLOPS, and the computing capability of each edge device is uniformly distributed within $[0.4, 0.6]$ TFLOPS. The uplink transmission rates from edge devices to edge servers and the fed server follow uniform distribution within $[75, 80]$ Mbps, and the corresponding downlink rates are set to $370$ Mbps. The transmission rates between edge servers and the cloud server and between edge servers and the fed server follow a uniform distribution within $[370, 400]$ Mbps. We set mini-batch size and learning rate to $16$ and $5\times {10^{-4}}$, respectively. It is assumed that each computing entity evenly allocates communication and computing resources to each of its sub-models.



\begin{algorithm}[t]
    \small
	\renewcommand{\algorithmicrequire}{\textbf{Input:}}
	\renewcommand{\algorithmicensure}{\textbf{Output:}}
	\caption{BCD-based Algorithm.}\label{BCD-based}
	\begin{algorithmic}[1]
        \REQUIRE  ${\bf I}^{(0)}$, ${\boldsymbol{\mu }}^{(0)}$, ${{\bf{T}}}^{(0)}$ and convergence threshold ${\varepsilon }$.
		\ENSURE  ${{\bf I}^{\bf{*}}}$ and ${{\boldsymbol{\mu }}^{\bf{*}}}$  
          \STATE Initialization:  $\tau  \leftarrow 0$.
          \REPEAT 
          \STATE$\tau  \leftarrow \tau+1$
           \STATE Update ${\bf I}^{(\tau)}$ by solving problem~\eqref{subproblem_1}
           \STATE Update ${\boldsymbol{\mu }}^{(\tau)}$ and  ${{\bf{T}}}^{(\tau)}$ by solving problem~\eqref{subproblem_2} 
         \UNTIL{\small{$|\Theta' ( {\bf I}^{(\tau)}, {{\boldsymbol{\mu }}^{(\tau)}}, {{\bf{T}}^{(\tau)}} ) - \Theta' ( {\bf I}^{(\tau-1)}, {\boldsymbol{\mu }}^{(\tau-1)}, {{\bf{T}}^{(\tau-1)}}| \!\le \!{\varepsilon }$}}    
         
	\end{algorithmic}  
\end{algorithm}

{\bf Dataset and model:} We adopt the widely-used image classification datasets CIFAR-10~\cite{krizhevsky2009learning} and MNIST~\cite{lecun1998mnist}. 
The CIFAR-10 dataset is split into 50000 training and 10000 test object images across 10 categories, such as airplanes and ships, and MNIST dataset comprises 60000 training and 10000 test grayscale images of handwritten digits 0 to 9. 
We conduct experiments under IID and non-IID settings. The data samples are shuffled and evenly distributed to all edge devices in the IID setting. In the non-IID setting~\cite{zhu2019broadband,yang2020energy}, we sort the data samples by labels, divide them into 40 shards, and assign 2 shards to each of the 20 edge devices. We employ the well-known VGG-16 neural network~\cite{simonyan2014very}, consisting of 13 convolution layers and 3 fully connected layers, in HSFL.

{\bf Benchmarks:} To comprehensively evaluate the performance of HSFL, we compare HSFL against the
following alternatives:

\begin{itemize}
    \item {\bf{RMA+MS:}} RMA+MS employs a random sub-model MA strategy (i.e., randomly drawing sub-model MA $I_m$ from 1 to 25 during model training.), and adopts the tailored MS scheme in Section~\ref{solu_appro}.    
    \item {\bf{MA+RMS:}} MA+RMS utilizes the sub-model MA strategy in Section~\ref{solu_appro} and employs a random MS scheme (i.e., model split points are randomly selected from 3 to 14 during model training).
    \item {\bf{RMA+RMS:}} RMA+RMS adopts the random sub-model MA and MS strategy.
    \item {\bf{DAMA+RMS:}} The DAMA+RMS benchmark employs the depth-aware adaptive client-side MA scheme~\cite{you2023aifed} and utilizes random MS strategy.
    \item {\bf{RMA+AMS:}} The RMA+HMS benchmark employs the random client-side MA scheme and utilizes resource-heterogeneity-aware MS strategy~\cite{wang2023coopfl}.
\end{itemize}

\section{Performance Evaluation}\label{simu_results}

In this section, we evaluate the performance of HSFL 
from three aspects: i) comparisons with five benchmarks to demonstrate the superiority of HSFL; ii) investigating the robustness of HSFL to
varying network computing and communication resources; iii) ablation study to show the necessity of each meticulously designed component in HSFL, including model aggregation (MA) and model splitting (MS).




\begin{figure}[t]
    \setlength\abovecaptionskip{6pt}
     \setlength\subfigcapskip{0pt}
     \centering
\subfigure[CIFAR-10 under IID setting.]{
\includegraphics[width=.434\columnwidth]{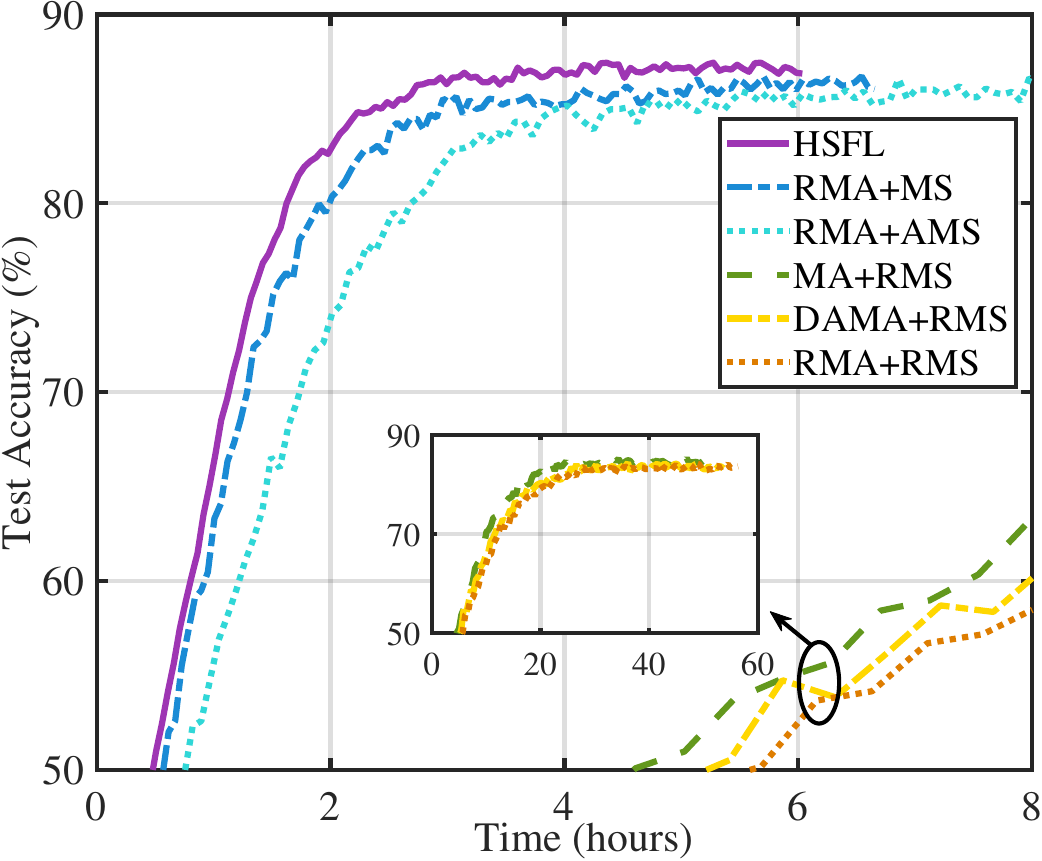}
    \label{sfig:cifar_iid_test_accuracy}
}
\subfigure[CIFAR-10 under non-IID setting.]{
    \includegraphics[width=.437\columnwidth]{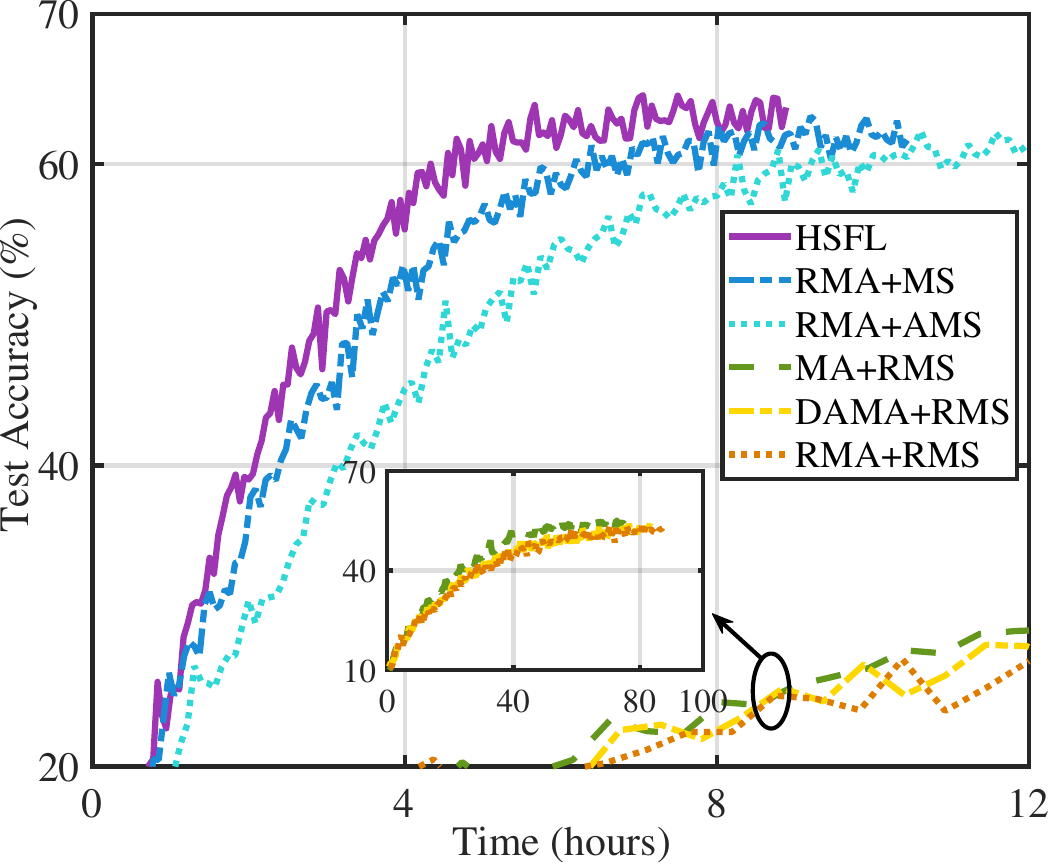}
    \label{sfig:cifar_non_iid_test_accuracy}
}
\subfigure[MNIST under IID setting.]{
\includegraphics[width=.434\columnwidth]{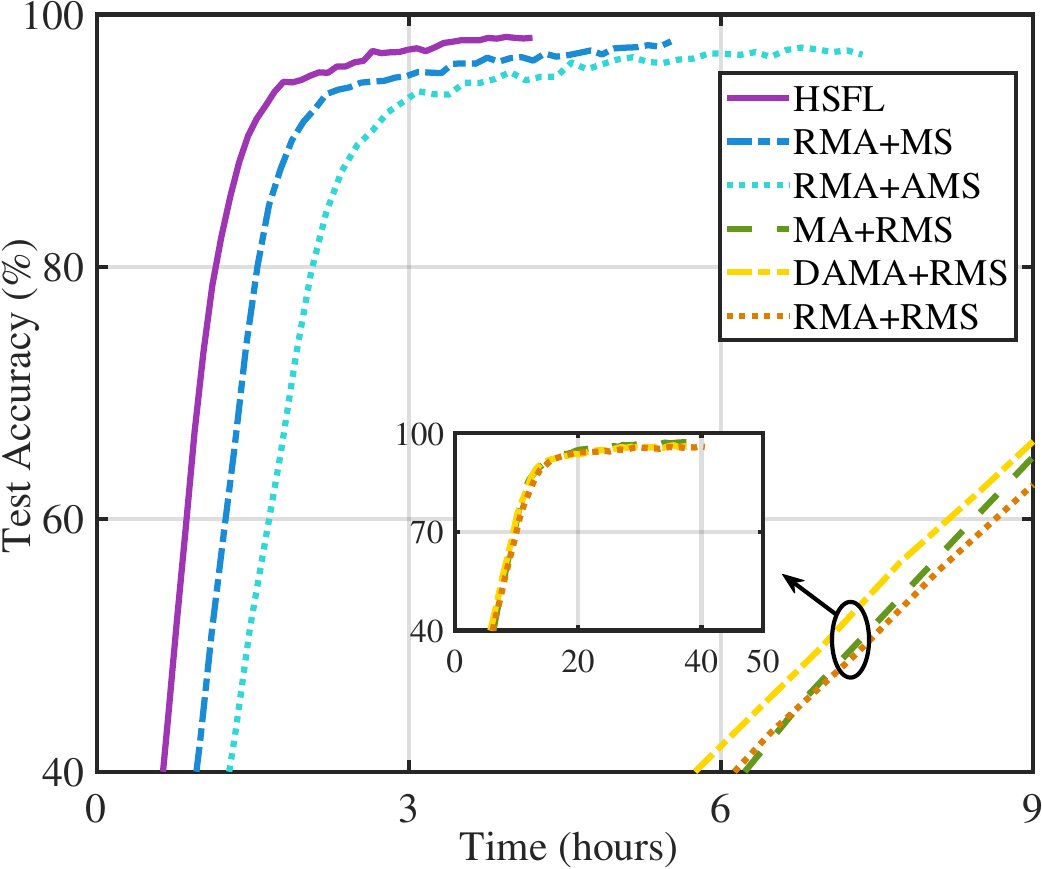}
    \label{sfig:mnist_iid_test_accuracy}
}
\subfigure[MNIST under non-IID setting.]{
    \includegraphics[width=.439\columnwidth]{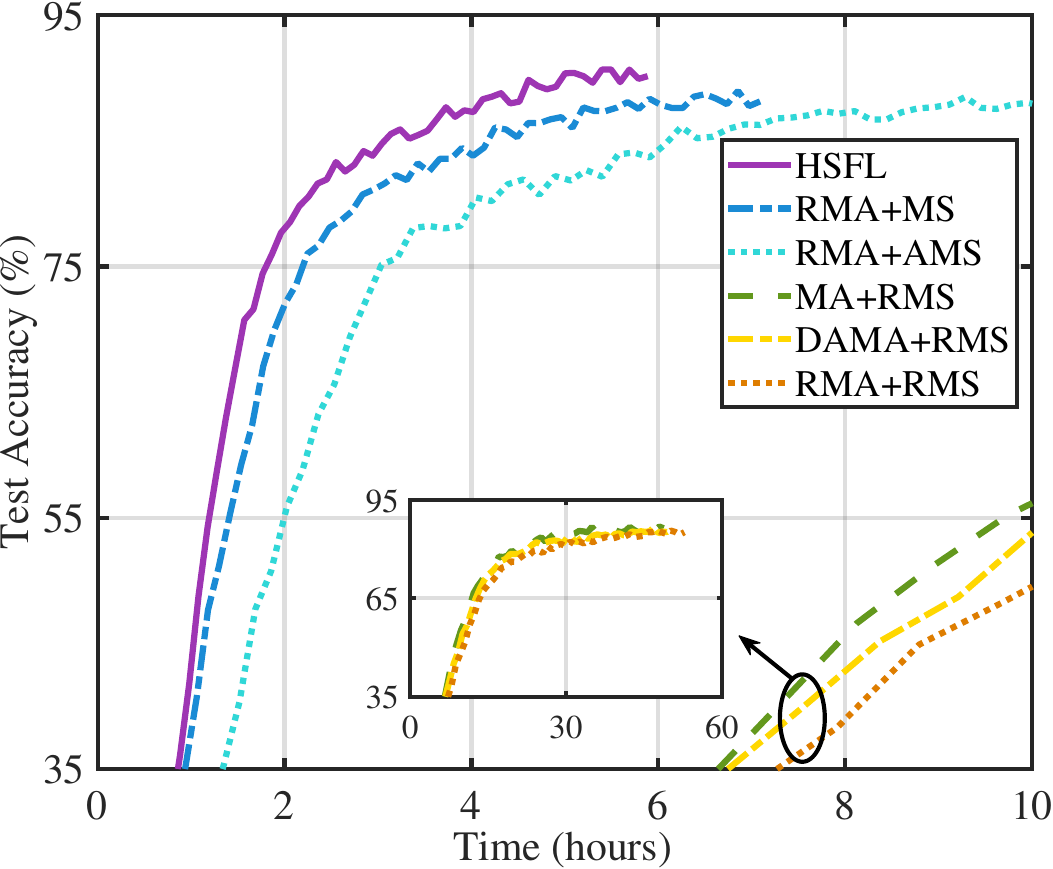}
    \label{sfig:mnist_non_iid_test_accuracy}
}
    \caption{The training performance on CIFAR-10 and MNIST datasets under IID and non-IID settings using VGG-16.}
    \label{fig:test_accuracy}
\end{figure}

\subsection{Superiority of HSFL}\label{simu_setup}

In this section, we conduct a comprehensive comparison of HSFL against five benchmarks in terms of test accuracy and
convergence speed.




Fig.~\ref{fig:test_accuracy} shows the training performance of HSFL and five benchmarks on CIFAR-10 and MNIST datasets. It is seen that HSFL outperforms the other five benchmarks in test accuracy and convergence speed as the model converges.
Notably, HSFL, RMA+MS, and RMA+AMS converge significantly faster than MA+RMS, DAMA+RMS, and RMA+RMS owing to the adaptive MS scheme, which strikes a good balance between communication-computing overhead and training convergence speed to expedite model training. Moreover, HSFL and MA+RMA not only converge faster than RMA+MS and RMA+RMS but also achieve comparable accuracy, thus demonstrating the effectiveness of tailored sub-model MA strategy.  Comparing Fig.~\ref{sfig:cifar_iid_test_accuracy} with Fig.~\ref{sfig:cifar_non_iid_test_accuracy}, and Fig.~\ref{sfig:mnist_iid_test_accuracy} with Fig.~\ref{sfig:mnist_non_iid_test_accuracy}, shows that the convergence speed of HSFL and other five benchmarks are slower under non-IID setting than under IID setting.

Fig.~\ref{fig:time_accuracy} presents converged accuracy and time (i.e., the incremental increase in test accuracy is less than 0.02$\%$) of HSFL and five benchmarks on CIFAR-10 and MNIST datasets. The performance gap between HSFL and RMA+MS and MA+RMS benchmarks reveals the substantial impact of MA and MS on training performance. 
Moreover, the impact of MS on model training outweighs that of MA frequency. This is because model split points directly determine the overall aggregation interval of the global model, thereby affecting the effectiveness of model training. In the IID setting, HSFL exhibits approximately {{$2.1 \%$}} and {{$1 \%$}} increase in accuracy and nearly $8.1$ and $8.5$ folds acceleration in model convergence over its counterparts without MS optimization (MA+RMS) on the CIFAR-10 and MNIST datasets, respectively. Moreover, this performance improvement is more pronounced under the non-IID setting, reaching about {{$8.8 \%$}} and {{$4 \%$}} in converged accuracy and $8$ and $7$ folds in convergence speed.  RMA+AMS and DAMA+RMS exhibit slower model convergence and lower accuracy compared to RMA+MS and MA+RMS, primarily because they are not designed based on model convergence. The comparison between HSFL and RMA+RMS reveals that HSFL converges at least $9$ times faster than its unoptimized counterparts with guaranteed training accuracy, underscoring the superior performance of the HSFL framework.

\begin{figure}[t]
    \setlength\abovecaptionskip{6pt}
     \setlength\subfigcapskip{0pt}
     \centering
\subfigure[CIFAR-10 under IID setting.]{
\includegraphics[width=.435\columnwidth]{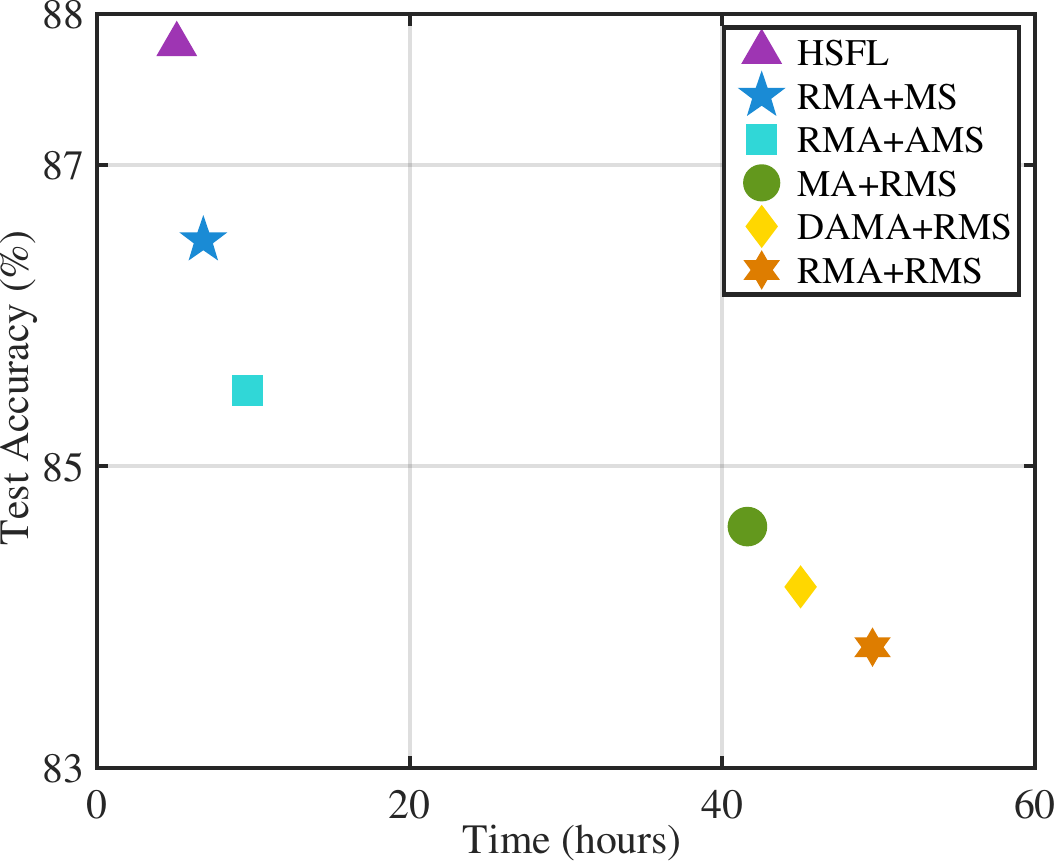}
    \label{sfig:cifar_iid_time_accuracy}
}
\subfigure[CIFAR-10 under non-IID setting.]{
    \includegraphics[width=.436\columnwidth]{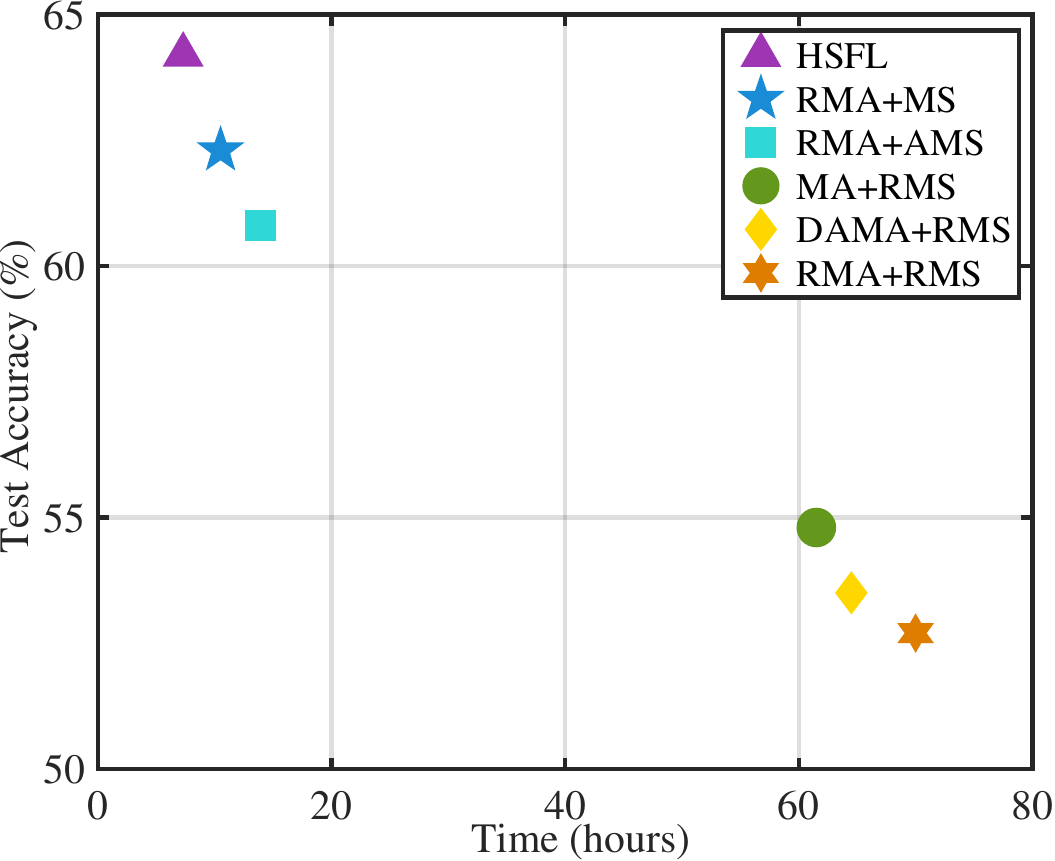}
    \label{sfig:cifar_non_iid_time_accuracy}
}
\subfigure[MNIST under IID setting.]{
\includegraphics[width=.435\columnwidth]{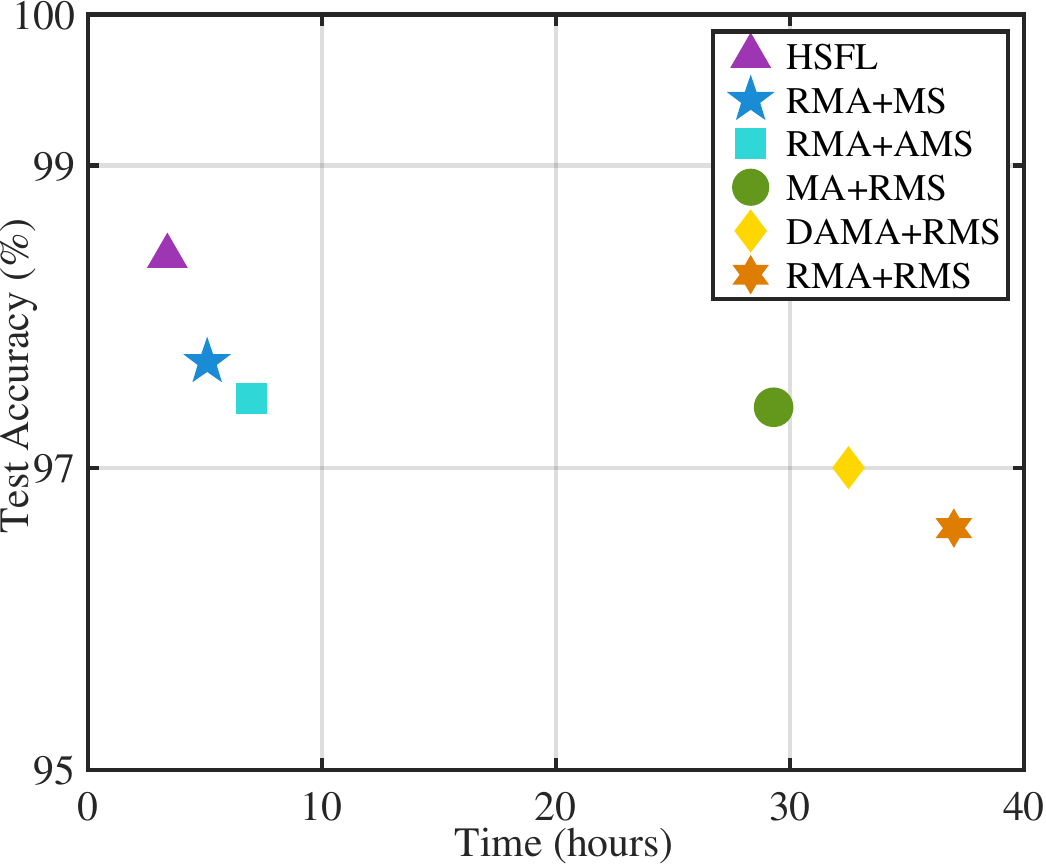}
    \label{sfig:mnist_iid_time_accuracy}
}
\subfigure[MNIST under non-IID setting.]{
    \includegraphics[width=.436\columnwidth]{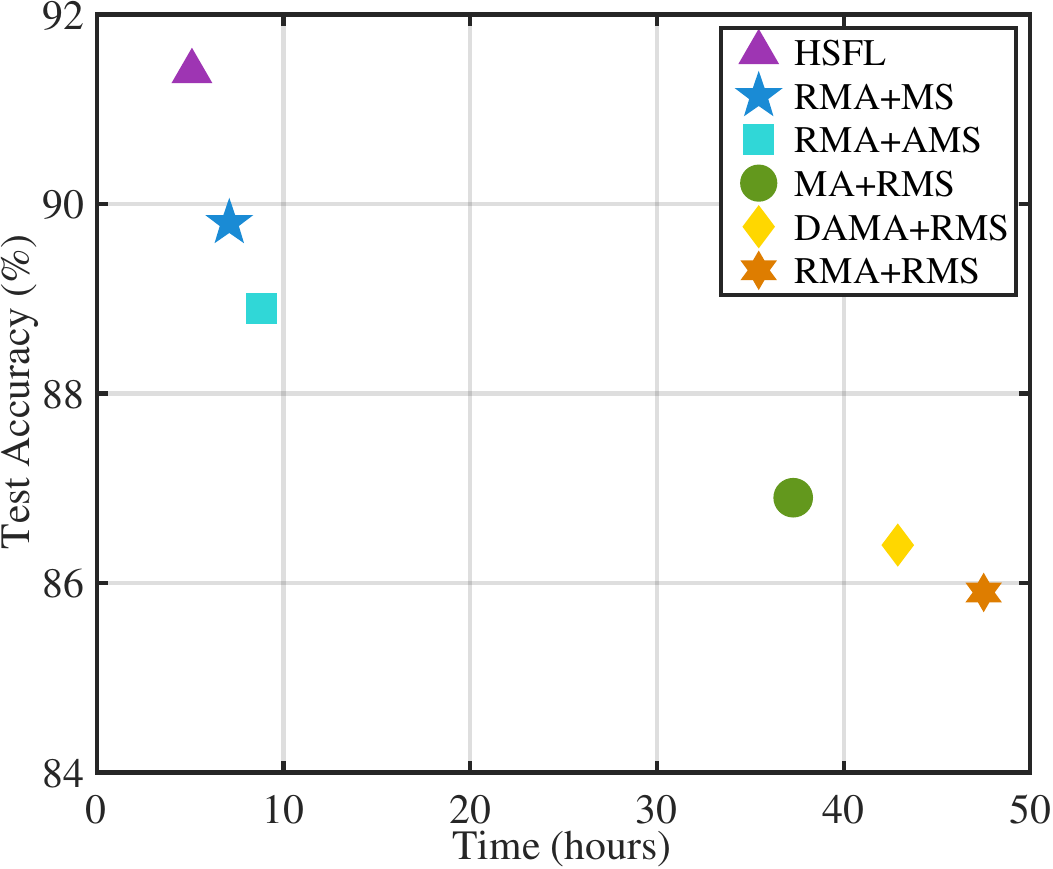}
    \label{sfig:mnist_non_iid_time_accuracy}
}
    \caption{The converged test accuracy and time on CIFAR-10 and MNIST datasets under IID and non-IID settings using VGG-16.}
    \label{fig:time_accuracy}
\end{figure}

\subsection{The Impact of Varying Network Resources}
In this section, we investigate the robustness of HSFL to varying network computing and communication resources. 

Fig.~\ref{fig:comput_accuracy} illustrates the converged time versus network computing and communication resources on the CIFAR-10 dataset under the IID setting. HSFL exhibits a faster convergence speed than the other five benchmarks across varying network resources.  It is clear that the convergence speed of RMA+RMS notably slows down as network resources diminish. This is because random MS and MA strategy fails to strike a good balance between computing-communication overhead and training convergence to expedite model training. The convergence speed of RMA+MS and MA+RMS decreases more slowly than RMA+RMS, indicating that optimizing either MA or MS can partially mitigate the rise in converged time caused by reduced network resources. In contrast, HSFL's converged time only experiences a slight increase with shrinking network resources. The underlying reason is two-fold: one is that the MS scheme is capable of selecting optimal model split points based on network resource conditions to balance computing-communication overhead and training convergence, and the other is that the MA strategy can adjust sub-model aggregation intervals to achieve the minimum communication rounds required for model convergence. This demonstrates the robustness of HSFL and highlights the adaptability of MA and MS strategies to changes in network resources.

Fig.~\ref{fig:different_SFL} shows the converged time of the HSFL framework with varying tiers under diverse computing and communication resources. Fig~\ref{sfig:computing_different_SFL} presents that the increase in training latency of the three-tier HSFL is negligible compared to client-edge and client-cloud SFL as computing resources decline. This is primarily because the three-tier HSFL incorporates more powerful cloud servers into the model training process, and MS strategy places a larger portion of the model on a cloud server to combat the slowdown in model convergence. Similarly, three-tier HSFL exhibits better robustness to communication resources than client-edge and client-cloud SFL. The reason for this is that more tiers in HSFL provide greater flexibility in optimizing MA and thus improve the effectiveness of model training.

\begin{figure}[t]
    \setlength\abovecaptionskip{6pt}
     \setlength\subfigcapskip{0pt}
     \centering
\subfigure[Computing.]{
\includegraphics[width=.434\columnwidth]{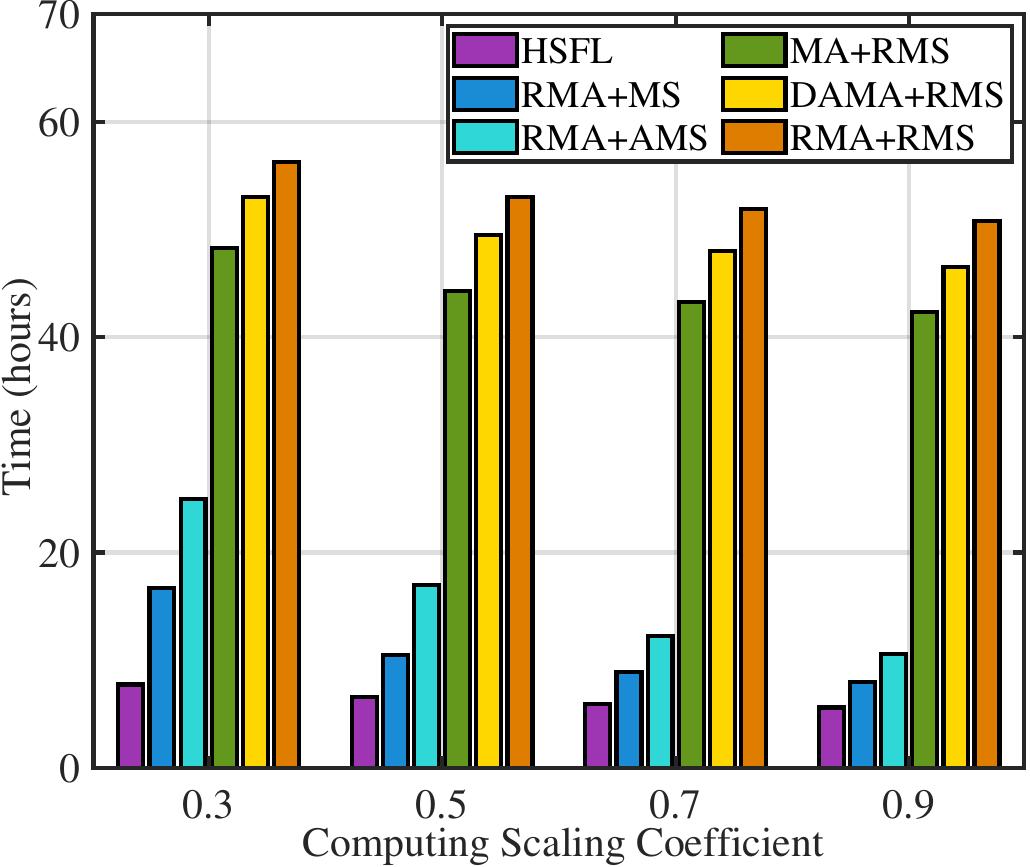}
    \label{sfig:device_comput_accuracy}
}
\subfigure[Communication.]{
    \includegraphics[width=.440\columnwidth]{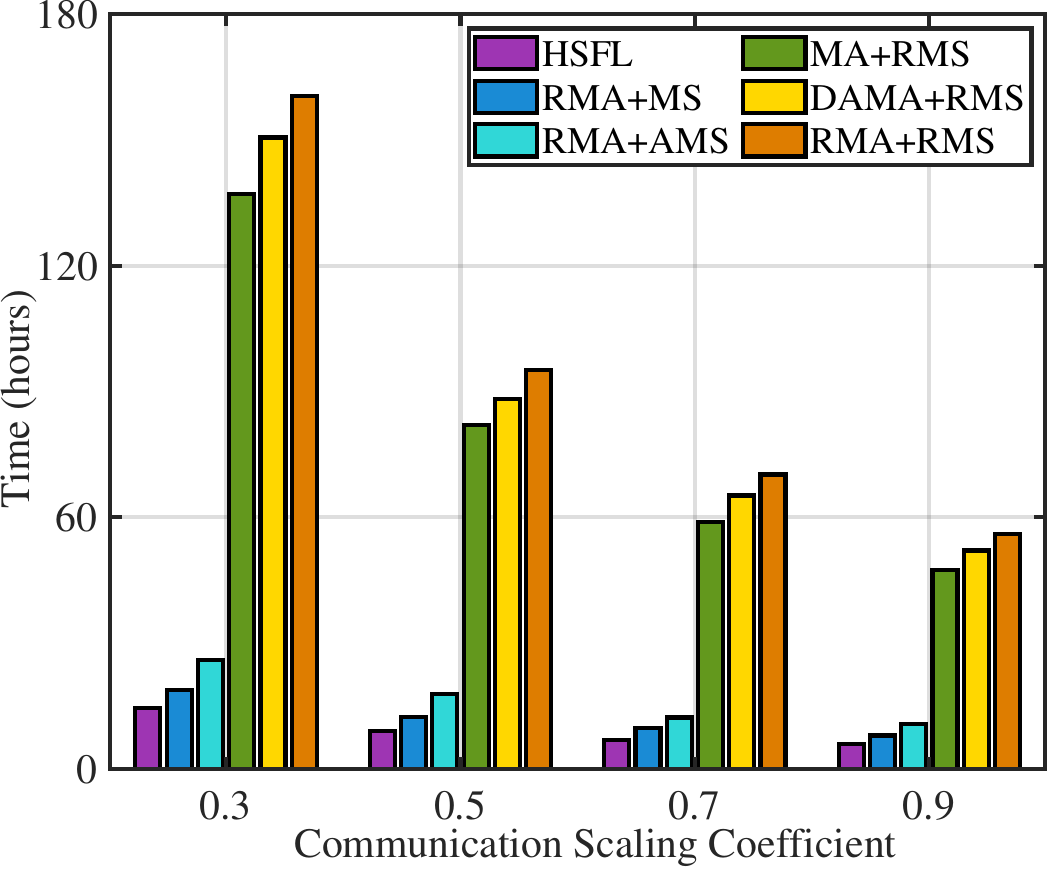}
    \label{sfig:server_comput_accuracy}
}
    \caption{ The converged time versus network computing and communication resources on CIFAR-10 dataset under IID setting, where computing/communication scaling coefficient denotes scaling multiple of computing capabilities of entities/transmission rate, with lower scaling coefficient implying more limited network computing and communication resources.}
    \label{fig:comput_accuracy}
\end{figure}

\begin{figure}[t]
    \setlength\abovecaptionskip{6pt}
     \setlength\subfigcapskip{0pt}
     \centering
\subfigure[Computing.]{
\includegraphics[width=.435\columnwidth]{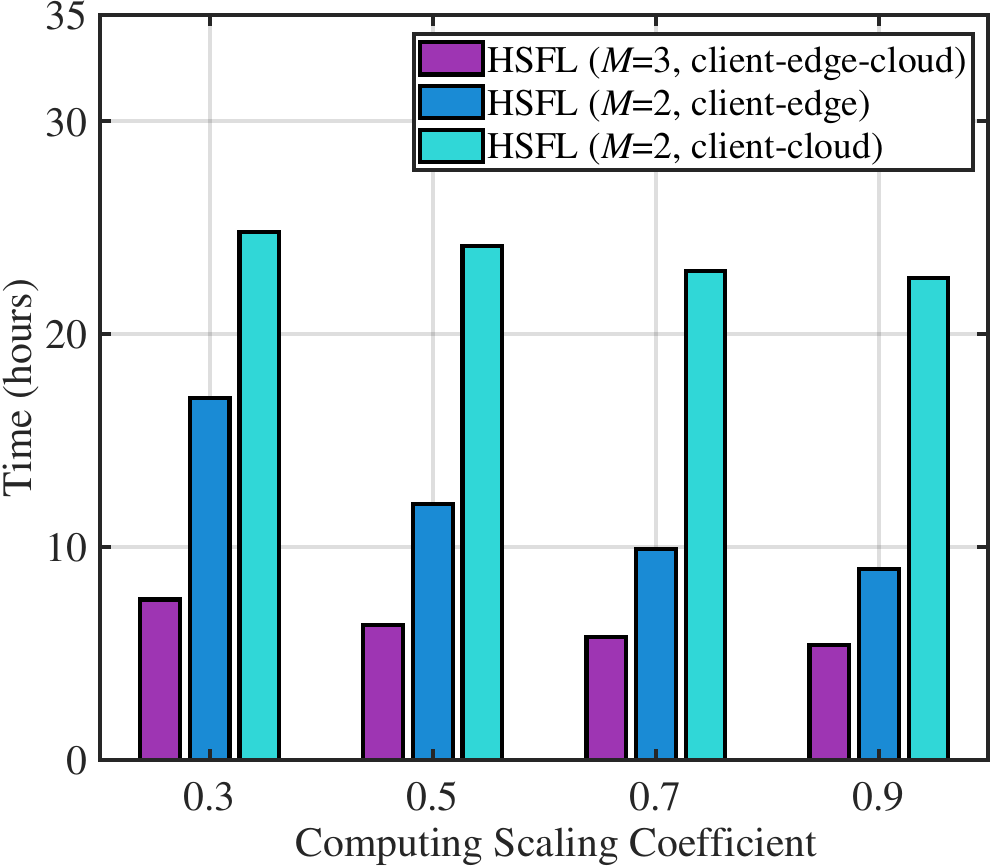}
    \label{sfig:computing_different_SFL}
}
\subfigure[Communications.]{
    \includegraphics[width=.436\columnwidth]{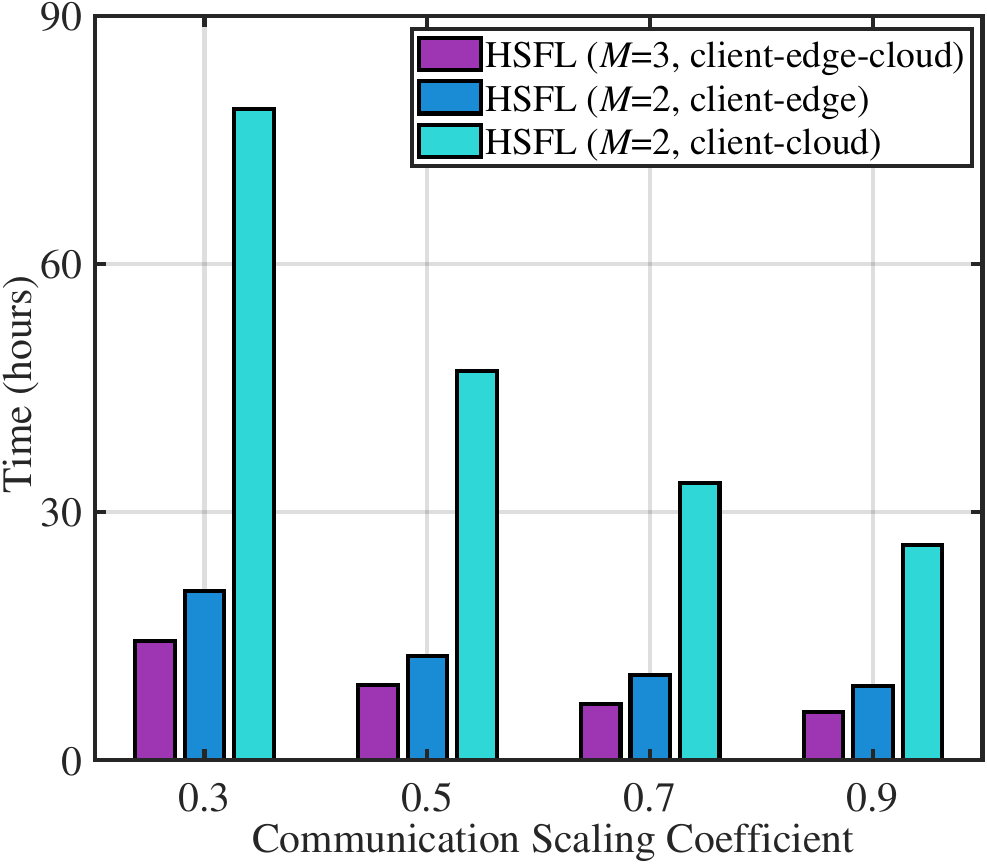}
    \label{sfig:communication_different_SFL}
}
    \caption{The converged time of the HSFL framework with varying tiers under diverse computing and communication resources on CIFAR-10 dataset under IID setting.}
    \label{fig:different_SFL}
\end{figure}

\subsection{Ablation Study of HSFL Framework}

In this section,  we conduct ablation experiments to demonstrate the effectiveness of each component in HSFL.


\begin{figure}[t]
    \setlength\abovecaptionskip{6pt}
     \setlength\subfigcapskip{0pt}
     \centering
\subfigure[CIFAR-10 under IID setting.]{
\includegraphics[width=.434\columnwidth]{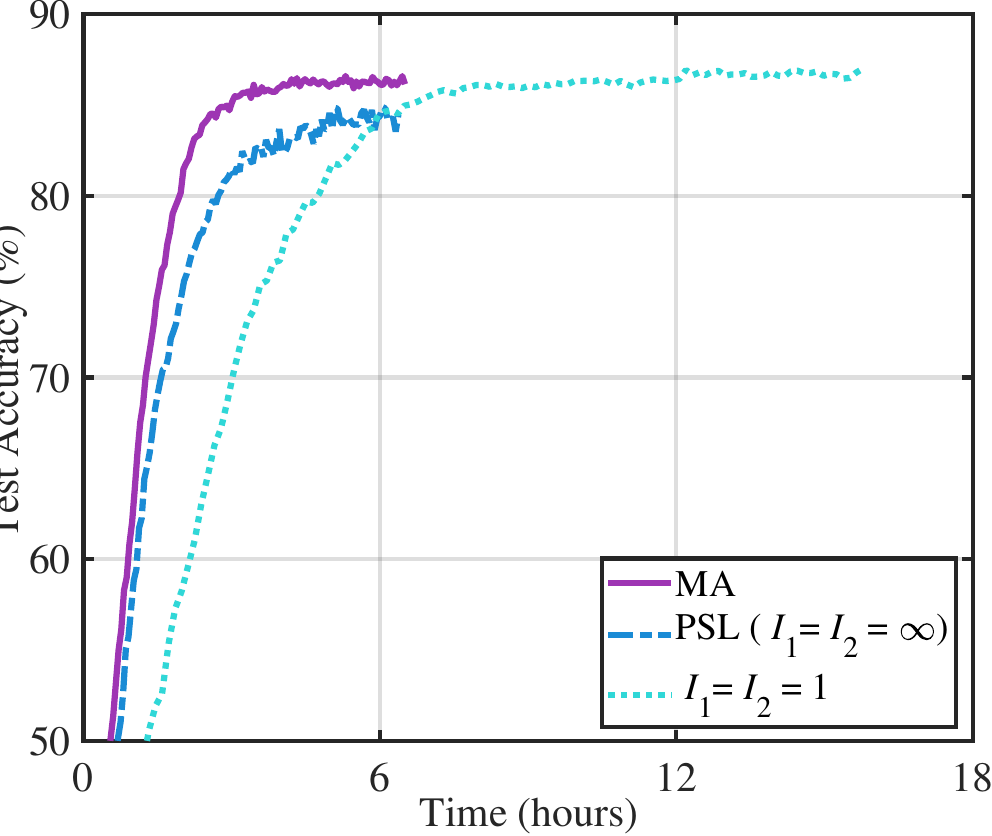}
    \label{sfig:cifar_iid_cut_aba}
}
\subfigure[CIFAR-10 under non-IID setting.]{
    \includegraphics[width=.441\columnwidth]{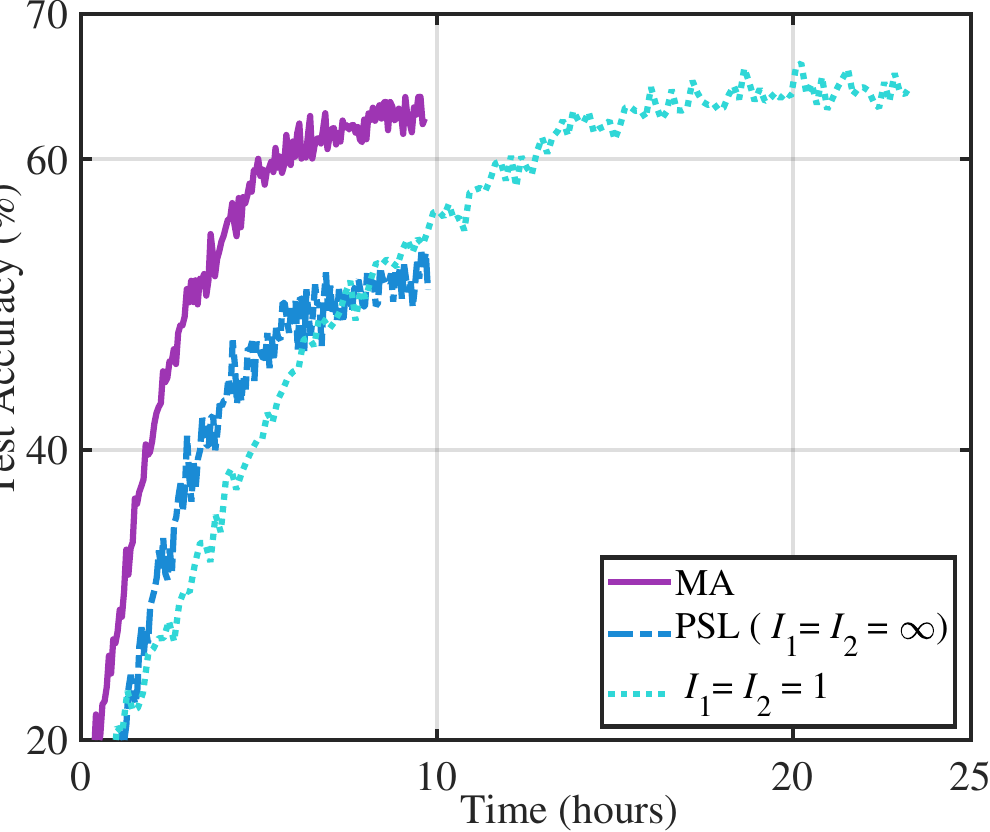}
    \label{sfig:cifar_non_iid_cut_aba}
}
    \caption{The ablation experiments for MA strategy on the CIFAR-10 dataset under IID and non-IID setting with $L_1=3$ and $L_2=8$, {where parallel split learning (PSL)~\cite{lin2023split} is a special case of SFL when $I_1= I_2 = \infty$.}}
    \label{fig:cifar_cut_aba}
\end{figure}

\begin{figure}[t]
    \setlength\abovecaptionskip{6pt}
     \setlength\subfigcapskip{0pt}
     \centering
\subfigure[CIFAR-10 under IID setting.]{
\includegraphics[width=.435\columnwidth]{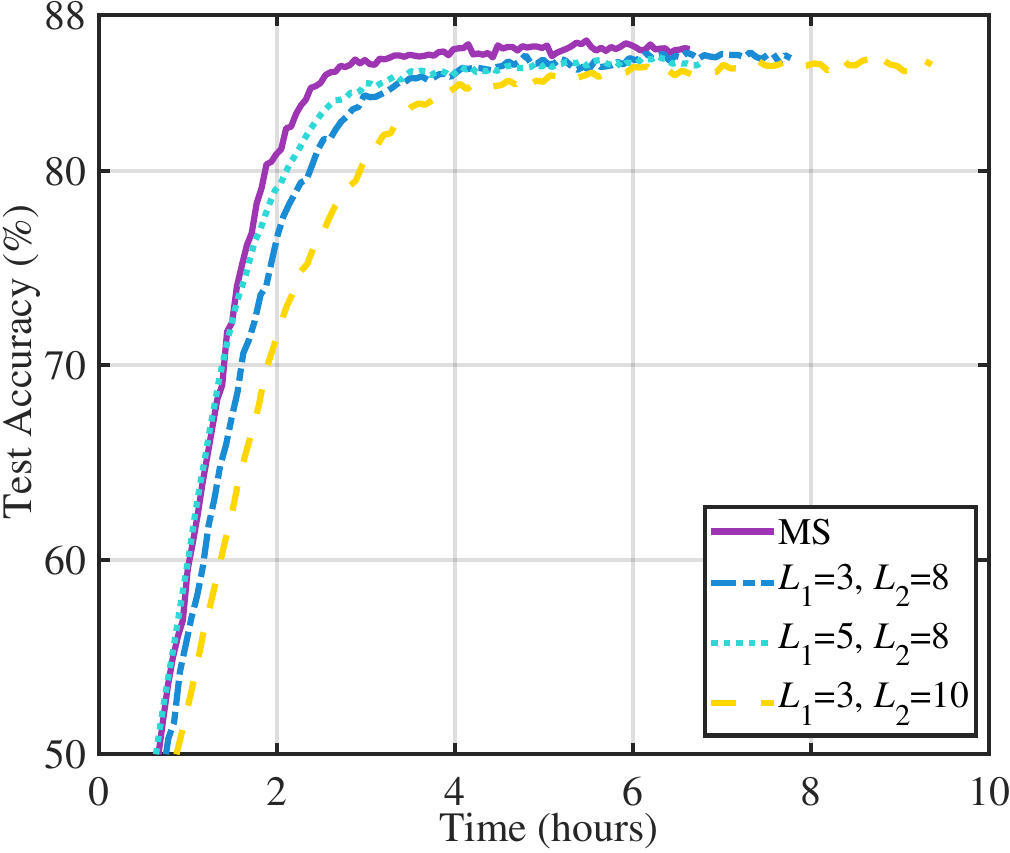}
    \label{sfig:cifar_iid_I_15_different_cut}
}
\subfigure[CIFAR-10 under non-IID setting.]{
    \includegraphics[width=.435\columnwidth]{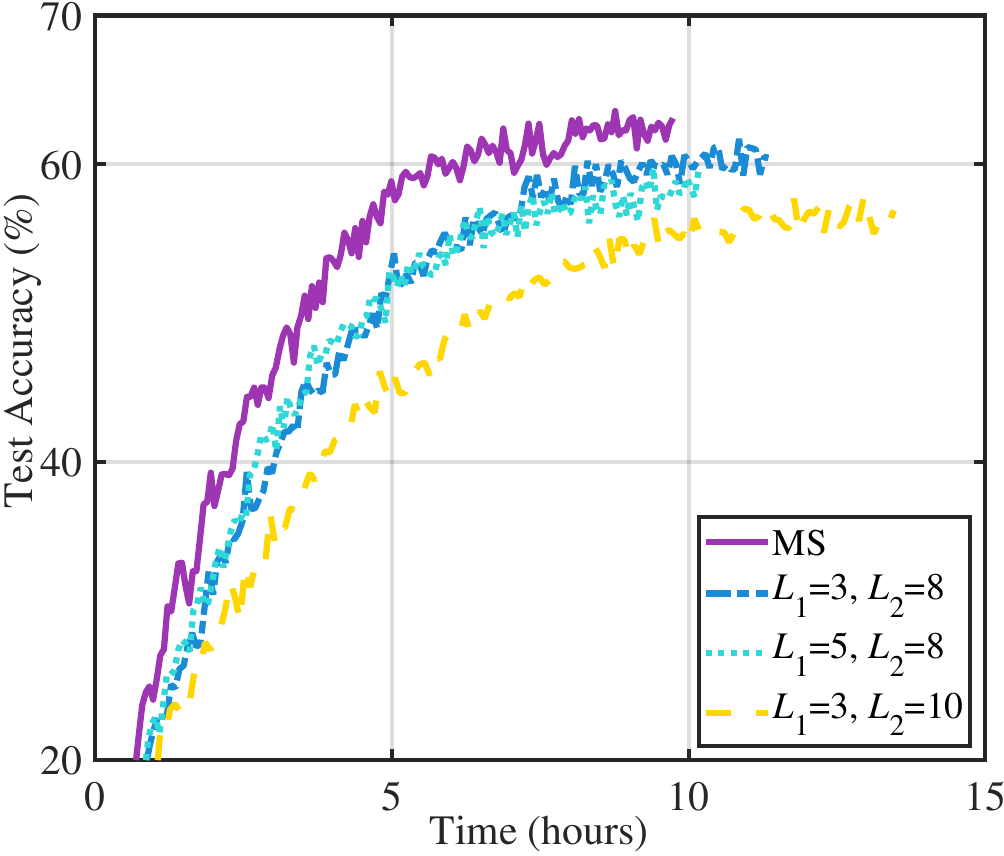}
    \label{sfig:cifar_non_iid_I_15_different_cut}
}
    \caption{The ablation experiments for MS scheme on the CIFAR-10 dataset under IID and non-IID setting with $I_1=140$ and $I_2=20$.}
    \label{fig:I_15_different_cut}
\end{figure}

Fig.~\ref{fig:cifar_cut_aba} shows the impact of MA interval on training performance for the CIFAR-10 dataset. PSL~\cite{lin2023split} showcases the slowest convergence rate and lowest convergence accuracy.  This performance degradation is primarily due to the lack of sub-model MA for all tiers except the $M$-th tier. This absence prevents the sub-models from capturing global information, hindering effective generalization on the global dataset and consequently leading to poor overall performance. In contrast, our proposed sub-model MA strategy significantly accelerates model convergence while achieving an accuracy comparable to that of HSFL with $I_1=I_2=1$ (equivalent to centralized learning).
In IID and non-IID settings, the proposed MA strategy outperforms PSL by achieving $3.4 \%$ and $11.4 \%$ higher convergence accuracy and expediting the model convergence by approximately $23.3 \%$ and $10.1 \%$, respectively. 
The key to these improvements lies in the MA strategy's ability to dynamically adjust the MA interval to achieve the minimum communication-computing latency required for model convergence. This dynamic adjustment ensures that sub-model aggregations occur often enough to maintain model performance but not so frequently as to cause unnecessary communication costs. Therefore, the effectiveness of the proposed MA scheme is demonstrated.

Fig.~\ref{fig:I_15_different_cut} presents the impact of MS on training performance for the CIFAR-10 dataset. It can be seen that the convergence accuracy and speed decline as layers of sub-models with shorter aggregation intervals decrease (e.g., increasing $L_1$ and $L_2$ leads to a reduction in the layers of sub-models located in second and third tiers, respectively), consistent with the derived convergence boundary derived in Eqn.~\eqref{convergence_bound}. The underlying reason for this performance degradation is that reducing the layers of sub-models with shorter aggregation intervals directly lowers the overall update frequency of the entire model, thus leading to a deterioration in generalization performance.
Moreover, the impact of MS on convergence accuracy and speed is more significant under non-IID than IID settings. In the non-IID setting, there are significant disparities in the local data distributions across edge devices. These discrepancies render MA more sensitive because the local dataset may not adequately represent the global data distribution. The comparisons between MS and the other five benchmarks demonstrate that the proposed MS strategy can expedite model convergence while guaranteeing training performance. 

\section{Conclusions}\label{conclu}

In this paper, we propose a novel hierarchical SFL framework, named HSFL, to minimize the training latency of HSFL for achieving target learning performance in a resource-constrained computing system. To guide the system optimization, we first derive the convergence bound of HSFL to quantify the impact of MS and MA on training convergence. Then, Following this, we formulate a joint optimization problem for MA and MS based on the derived convergence bound and develop efficient algorithms to solve it. Extensive simulation results demonstrate that our proposed HSFL framework achieves the target accuracy with significantly less time budget compared to benchmarks, showcasing the effectiveness of the tailored MA and MS strategy.


\appendix


\section*{{A. Proof of Lemma 1}} \label{aa}
For training round $t\geq 1$, we consider the largest $t_{0} \leq  t$ that satisfies $t_0 \bmod I = 0$ (Note that such $t_{0}$ must exist and $t - t_{0} \leq I$). Recalling the Eqn.~\eqref{stage_5_2} for updating the model weights and Eqn.~\eqref{h_c_define} for sub-model aggregation, we can derive 
\begin{align*}
{\bf{w}}_m^{j,t} = {\bf{w}}_{m,n}^{{t_0}} - \gamma \sum\limits_{\tau  = {t_0} + 1}^t {\frac{1}{{N_m^j}}} \sum\limits_{n \in {\cal N}_m^j} {{\bf{g}}_{m,n}^\tau } 
\end{align*}
and 
\begin{align*}
{\bf{\overline w}}_m^t = {\bf{w}}_{m,n}^{{t_0}} - \gamma \sum\limits_{\tau  = {t_0} + 1}^t {\frac{1}{N}} \sum\limits_{j = 1}^{{J_m}} {\sum\limits_{n \in {\cal N}_m^j} {{\bf{g}}_{m,n}^\tau } }. 
\end{align*}
It is worth noting that $\Vert {\bf{\overline w}}_m^t - {\mathbf{w}}^{t}_{m,n} \Vert^{2}=0$ for $I_m=1$. Thus, we have
{
\begin{align*}
&\mathbb{E} [\Vert {\bf{\overline w}}_m^t - {\mathbf{w}}^{t}_{m,n} \Vert^{2}] \\\overset{(a)}=&  {\mathbbm{1}}_{\{I_m > 1\}} \mathbb{E} [\sum\limits_{j = 1}^{{J_m}} {\frac{{N_m^j}}{N}\Vert{\bf{\overline w}}_m^t - {\bf{w}}_m^{j,t}{\Vert^2}}  ] \\=& {\mathbbm{1}}_{\{I_m > 1\}}  \mathbb{E} [\sum\limits_{j = 1}^{{J_m}} \frac{{N_m^j}}{N} \Vert\gamma \sum\limits_{\tau  = {t_0} + 1}^t {\frac{1}{N}} \sum\limits_{j = 1}^{{J_m}} {\sum\limits_{n \in {\cal N}_m^j} {{\bf{g}}_{m,n}^\tau } }  \\ - & \gamma\sum\limits_{\tau  = {t_0} + 1}^t { \frac{1}{{N_m^j}}} \sum\limits_{n \in {\cal N}_m^j} {{\bf{g}}_{m,n}^\tau } {\Vert^2} ]\\
=& {\mathbbm{1}}_{\{I_m > 1\}} \gamma^{2} \mathbb{E} [\sum\limits_{j = 1}^{{J_m}} \frac{{N_m^j}}{N} \Vert  \sum\limits_{\tau  = {t_0} + 1}^t {\frac{1}{N}} \sum\limits_{j = 1}^{{J_m}} {\sum\limits_{n \in {\cal N}_m^j} {{\bf{g}}_{m,n}^\tau } }  \\ - & \sum\limits_{\tau  = {t_0} + 1}^t { \frac{1}{{N_m^j}}} \sum\limits_{n \in {\cal N}_m^j} {{\bf{g}}_{m,n}^\tau } {\Vert^2} ]\\
\overset{(b)}{\leq}&  {\mathbbm{1}}_{\{I_m > 1\}} 2\gamma^{2} \mathbb{E} [\sum\limits_{j = 1}^{{J_m}} \frac{{N_m^j}}{N}[\Vert\sum\limits_{\tau  = {t_0} + 1}^t {\frac{1}{N}} \sum\limits_{j = 1}^{{J_m}} {\sum\limits_{n \in {\cal N}_m^j} {{\bf{g}}_{m,n}^\tau } } {\Vert^2}  \\ + &  \Vert\sum\limits_{\tau  = {t_0} + 1}^t {\frac{1}{{N_m^j}}} \sum\limits_{n \in {\cal N}_m^j} {{\bf{g}}_{m,n}^\tau } {\Vert^2} ] ]\\
\overset{(c)}{\leq}& {\mathbbm{1}}_{\{I_m > 1\}} 2\gamma^{2} (t-t_{0}) \mathbb{E} [\sum\limits_{j = 1}^{{J_m}} \frac{{N_m^j}}{N}[ \sum\limits_{\tau  = {t_0} + 1}^t \Vert{\frac{1}{N}} \sum\limits_{j = 1}^{{J_m}} {\sum\limits_{n \in {\cal N}_m^j} {{\bf{g}}_{m,n}^\tau } } {\Vert^2} \\+& 
\sum\limits_{\tau  = {t_0} + 1}^t \Vert{\frac{1}{{N_m^j}}} \sum\limits_{n \in {\cal N}_m^j} {{\bf{g}}_{m,n}^\tau } {\Vert^2} ]]\\
\overset{(d)}{\leq}& {\mathbbm{1}}_{\{I_m > 1\}} 2\gamma^{2} (t-t_{0}) \mathbb{E} [\sum\limits_{j = 1}^{{J_m}} \frac{{N_m^j}}{N}[ \sum\limits_{\tau  = {t_0} + 1}^t \!\!({\frac{1}{N}} \sum\limits_{j = 1}^{{J_m}} {\sum\limits_{n \in {\cal N}_m^j} \!\!{\Vert{\bf{g}}_{m,n}^\tau } } {\Vert^2}) \\+&
\sum\limits_{\tau  = {t_0} + 1}^t ({\frac{1}{{N_m^j}}} \sum\limits_{n \in {\cal N}_m^j} \Vert{{\bf{g}}_{m,n}^\tau } {\Vert^2}) ]]\\
 \overset{(e)}{\leq} & {\mathbbm{1}}_{\{I_m > 1\}} 4\gamma^{2} I_m^{2} \sum\limits_{l = L_{m-1}+1}^{L_{m-1}+L_m}  {G _l^2},
\end{align*}
}
where ${\mathbbm{1}_{\{ {\rm{\cdot}}\} }}$ represents the indicator function, equal to 1 if the condition ${\{ {\rm{\cdot}}\} }$ holds, and 0 otherwise; (a) holds because multiple sub-models in each computing entity are aggregated every training round; (b)-(d) follows by using the inequality $\Vert \sum_{i=1}^{n} \mathbf{z}_{i}\Vert^{2} \leq n \sum_{i=1}^{n} \Vert \mathbf{z}_{i}\Vert^{2}$ for any vectors $\mathbf{z}_{i}$ and any positive integer $n$ (using $n=2$ in (b), $n=t-t_0$ in (c), and $n=N$ in (d)); and (e) follows from {\bf Assumption \ref{asp:2}}.

\section*{{B. Proof of Theorem 1}} \label{bb}


We consider training round $t\geq 1$. By the smoothness of loss function $f\left(  \cdot  \right)$, we have
{ \begin{align}\label{eq:equal_1_total}
\mathbb{E}[ {f({{\bf{\overline w}}^t})} ] \le& \mathbb{E}[ {f({{\bf{\overline w}}^{t - 1}})} ] + \mathbb{E}[ {\langle {\nabla _{\bf{w}}}f({{\bf{\overline w}}^{t - 1}}),{{\bf{\overline w}}^t} - {{\bf{\overline w}}^{t - 1}}\rangle } ]{\rm{  }} \nonumber \\~+ &\frac{\beta }{2}\mathbb{E}[ {{{\| {{{\bf{\overline w}}^t} - {{\bf{\overline w}}^{t - 1}}} \|}^2}} ].
\end{align}}
Note that
{\begin{align}\label{eq:w_decouple}
 &\mathbb{E}[{{{\| {{{\bf{\overline w}}^t} - {{\bf{\overline w}}^{t - 1}}} \|}^2}}]\nonumber\\
 =&\mathbb{E}[ {{{\| { [{\bf{\overline w}}^t_1, {\bf{\overline w}}^t_2, ..., {\bf{\overline w}}^t_M  ] - [{\bf{\overline w}}^{t-1}_1, {\bf{\overline w}}^{t-1}_2, ..., {\bf{\overline w}}^{t-1}_M  ]} \|}^2}} ]\nonumber\\ =&\mathbb{E}[ {{{\| {[ {{\bf{\overline w}}^{t}_1 - {\bf{\overline w}}^{t-1}_1; {\bf{\overline w}}^{t}_2 - {\bf{\overline w}}^{t-1}_2; ...;{\bf{\overline w}}^{t}_M - {\bf{\overline w}}^{t-1}_M} ]} \|}^2}} ]\nonumber\\=&\sum\limits_{m = 1}^M \mathbb{E}[{{{\| {{{\bf{\overline w}}_m^t} - {{\bf{\overline w}}_m^{t - 1}}} \|}^2}}],
\end{align}}%
where 
$\mathbb{E}[{{{\| {{{\bf{\overline w}}_m^t} - {{\bf{\overline w}}_m^{t - 1}}} \|}^2}}]$ can be bounded as 
{
\begin{align}\label{eq:wc_squre}
 &\mathbb{E}[ {{{\| { {{{\bf{\overline w}}_m^t}\! - {{\bf{\overline w}}_m^{t - 1}}} } \|}^2}} ] \overset{(a)}{=}{\gamma ^2}\mathbb{E}[|| {\frac{1}{N}\sum\limits_{j = 1}^{{J_m}} {\sum\limits_{n \in {\cal N}_m^j} {{\bf{g}}_{m,n}^\tau } } } |{|^2}] \nonumber\\ 
 \overset{(b)}{=}& {\gamma ^2}\mathbb{E}[||\frac{1}{N}\sum\limits_{j = 1}^{{J_m}} {\sum\limits_{n \in {\cal N}_m^j} {({\bf{g}}_{m,n}^\tau } }  - {\nabla _{{{\bf{w}}_m}}}{f_n}({\bf{w}}_{m,n}^{t - 1})) |{|^2}]\nonumber\\ 
 +& {\gamma ^2}\mathbb{E}[||\frac{1}{N}\sum\limits_{j = 1}^{{J_m}} {\sum\limits_{n \in {\cal N}_m^j} {{\nabla _{{{\bf{w}}_m}}}{f_n}\left( {{\bf{w}}_{m,n}^{t - 1}} \right)} }  |{|^2}] \nonumber\\
\overset{(c)}{=}& \frac{{{\gamma ^2}}}{{{N^2}}}\sum\limits_{n = 1}^N  \mathbb{E}[||{({\bf{g}}_{m,n}^\tau } - {\nabla _{{{\bf{w}}_m}}}{f_n}\left( {{\bf{w}}_{m,n}^{t - 1}} \right)|{|^2}]\nonumber\\
+& {\gamma ^2}\mathbb{E}[||\frac{1}{N}\sum\limits_{n = 1}^N {{\nabla _{{{\bf{w}}_m}}}{f_n}\left( {{\bf{w}}_{m,n}^{t - 1}} \right)} |{|^2}] \nonumber\\
\overset{(d)}{\leq }& \frac{{ \sum\limits_{l = L_{m-1}+1}^{L_{m-1}+L_{m}} \!\!\!\!\!{\gamma ^2}{\sigma _l^2}}}{N} +{\gamma ^2}\mathbb{E}[||\frac{1}{N}\!\!\sum\limits_{n = 1}^N {{\nabla _{{{\bf{ w}}_m}}}{f_n}\!\!\left( {{\bf{w}}_{m,n}^{t - 1}} \right)} |{|^2}],
\end{align}}%
where (a) follows from Eqn.~\eqref{stage_5_2} and Eqn.~\eqref{h_c_define}; (b) follows by observing that $\mathbb{E}[\mathbf{g}_{m,n}^{t}] = \nabla_{{\bf{w}}_m} f_{n}({\mathbf{w}}_{m,n}^{t-1})$ and applying the equality $\mathbb{E}[\Vert \mathbf{z} \Vert^{2}] = \mathbb{E} [ \Vert \mathbf{\mathbf{z}} - \mathbb{E}[\mathbf{z}]\Vert^{2}] + \Vert\mathbb{E}[\mathbf{z}] \Vert^{2}$ that holds for any random vector $\mathbf{z}$; (c) follows because each ${{\bf{g}}_{m,n}^\tau } - \nabla_{{\bf{w}}_n} f_{n}(\mathbf{w}_{m,n}^{t-1})$ has zero mean and is independent across edge devices; and (d) follows from {\bf Assumption 2}.

Substituting Eqn.~\eqref{eq:wc_squre} into Eqn.~\eqref{eq:w_decouple} yields
{\begin{align}\label{eq:w-difference-squre}
&\mathbb{E}[\|{{\bf{\overline w}}^t} - {{\bf{\overline w}}^{t - 1}}\|{^2}] \nonumber \\\le& \frac{{{\gamma ^2}\sum\limits_{l = 1}^{L} {\sigma _l^2}}}{N} + {\gamma ^2}\sum\limits_{m= 1}^{M} \mathbb{E}[||\frac{1}{N}\sum\limits_{n = 1}^N {{\nabla _{{{\bf{w}}_m}}}{f_n}\left( {{\bf{w}}_{m,n}^{t - 1}} \right)} |{|^2}] \nonumber \\
=&\frac{{{\gamma ^2}\sum\limits_{l = 1}^{L} {\sigma _l^2}}}{N} + {\gamma ^2}\mathbb{E}[||\frac{1}{N}\sum\limits_{n = 1}^N {{\nabla _{\bf{w}}}{f_n}\left( {{\bf{w}}_n^{t - 1}} \right)} |{|^2}].
\end{align}}

We further note that 
{\begin{align} \label{eq:inner_product_w}
&\mathbb{E}[\langle \nabla_{\bf{ w}} f({\mathbf{\overline w}}^{t-1}), {\mathbf{\overline w}}^{t} - {\mathbf{\overline w}}^{t-1}\rangle] \nonumber\\
\overset{(a)}{=}& -\gamma \mathbb{E} [\langle \nabla_{\bf{w}} f({\mathbf{\overline w}}^{t-1}), \frac{1}{N} \sum_{n=1}^{N} \mathbf{g}_{n}^{t}\rangle] \nonumber \\
\overset{(b)}{=}& -\gamma \mathbb{E}[\langle {\nabla_{\bf{w}}}f({{\bf{\overline w}}^{t - 1}}),\frac{1}{N}\sum\limits_{n = 1}^N \nabla_{\bf{w}}  {f_n}({\bf{w}}_n^{t - 1})\rangle ]\nonumber \\
\overset{(c)}=&  - \frac{\gamma }{2}\mathbb{E}[||{\nabla _{\bf{w}}}f({{\bf{\overline w}}^{t - 1}})|{|^2} + ||\frac{1}{N}\sum\limits_{n = 1}^N {{\nabla _{\bf{w}}}} {f_n}({\bf{w}}_{n}^{t - 1})|{|^2} \nonumber \\
&- ||{\nabla _{{{\bf{w}}}}}f({{\bf{\overline w}}^{t - 1}}) - \frac{1}{N}\sum\limits_{n = 1}^N {{\nabla _{{{\bf{w}}}}}} {f_n}({\bf{w}}_{n}^{t - 1})|{|^2}],
\end{align}}%
where ${\bf{g}}_{n}^t=[{\bf{g}}_{1,n}^t; {\bf{g}}_{2,n}^t; ...; {\bf{g}}_{M,n}^t]$ represents the gradient of client $n$, (a) follows from ${{\bf{\overline w}}^t} = [{\bf{\overline w}}^t_1; {\bf{\overline w}}^t_2; ...; {\bf{\overline w}}^t_M ]$, Eqn.~\eqref{stage_5_2}, and Eqn.~\eqref{h_c_define};
(c) follows from the  identity $\langle \mathbf{z}_{1}, \mathbf{z}_{2}\rangle = \frac{1}{2} \big( \Vert \mathbf{z}_{1}\Vert^{2} + \Vert \mathbf{z}_{2}\Vert^{2} - \Vert \mathbf{z}_{1} - \mathbf{z}_{2}\Vert^{2} \big)$ for any two vectors $\mathbf{z}_{1}, \mathbf{z}_{2}$ of the same length; (b) follows from 
{\begin{align*}
	&\mathbb{E}[\langle \nabla_{\bf{w}} f({\mathbf{\overline w}}^{t-1}), \frac{1}{N} \sum_{n=1}^{N} \mathbf{g}_{n}^{t}\rangle] \\
	=& \mathbb{E}[\mathbb{E}[\langle \nabla_{\bf{w}} f({\mathbf{\overline w}}^{t-1}), \frac{1}{N} \sum_{n=1}^{N} \mathbf{g}_{n}^{t}\rangle | \boldsymbol{\xi}^{[t-1]}]] \\
	=& \mathbb{E}[\langle \nabla_{\bf{w}} f({\mathbf{\overline w}}^{t-1}), \frac{1}{N} \sum_{n=1}^{N} \mathbb{E}[\mathbf{g}_{n}^{t}| \boldsymbol{\xi}^{[t-1]}]\rangle ]\\
	 =& \mathbb{E}[\langle \nabla_{\bf{w}} f({\mathbf{\overline w}}^{t-1}), \frac{1}{N} \sum_{n=1}^{N} \nabla_{\bf{w}} f_{n}(\mathbf{w}_{n}^{t-1})\rangle ],
	\end{align*}}%
where the first equality follows by the law of expectations, the second equality follows because ${\mathbf{\overline w}}^{t-1}$ is determined by $\boldsymbol{\xi}^{[t-1]}= [\boldsymbol{\xi}^{1}, \ldots, \boldsymbol{\xi}^{t-1}]$ and the third equality follows from $\mathbb{E}[\mathbf{g}_{n}^{t} | \boldsymbol{\xi}^{[t-1]}] = \mathbb{E}[\nabla F_{n}(\mathbf{w}_{n}^{t-1};\xi^{t}_{n}) | \boldsymbol{\xi}^{[t-1]}] = \nabla f_{n}(\mathbf{w}_{n}^{t-1})$.

Substituting Eqn.~\eqref{eq:w-difference-squre} and Eqn.~\eqref{eq:inner_product_w} into Eqn.~\eqref{eq:equal_1_total}, we have  
{\begin{align} \label{eq:11}
&\mathbb{E}[f({\mathbf{\overline w}}^{t})] \nonumber\\
\leq &\mathbb{E}[f({\mathbf{\overline w}}^{t-1})] - \frac{\gamma - \gamma^{2}\beta}{2} \mathbb{E} [\Vert \frac{1}{N} \sum_{n=1}^{N} \nabla_{\bf{w}} f_{n} (\mathbf{w}_{n}^{t-1})\Vert^{2}] \nonumber \\
 &- \frac{\gamma}{2} \mathbb{E}[\Vert \nabla_{\bf{w}} f({\mathbf{\overline w}}^{t-1})\Vert^{2}+ \frac{\beta\gamma^{2}\sum\limits_{l = 1}^{L} {\sigma _l^2}}{2N}  \nonumber\\ &+ \frac{\gamma}{2}\mathbb{E}[||{\nabla _{{{\bf{w}}}}}f({{\bf{\overline w}}^{t - 1}}) -\frac{1}{N}\sum\limits_{n = 1}^N {{\nabla _{{{\bf{w}}}}}} {f_n}({\bf{w}}_n^{t - 1})|{|^2}]\nonumber \\
\overset{(a)}{\leq}& \mathbb{E}[f({\mathbf{\overline w}}^{t-1})] - \frac{\gamma}{2} \mathbb{E}[\Vert \nabla_{\bf{w}} f({\mathbf{\overline w}}^{t-1})\Vert^{2}+ \frac{\beta\gamma^{2}\sum\limits_{l = 1}^{L} {\sigma _l^2}}{2N}  \nonumber\\ &+ \frac{\gamma}{2}\sum\limits_{m = 1}^M \mathbb{E}[||{\nabla _{{{\bf{w}}_m}}}f({{\bf{\overline w}}_{m}^{t - 1}}) -\frac{1}{N}\sum\limits_{n = 1}^N {{\nabla _{{{\bf{w}}_m}}}} {f_n}({\bf{w}}_{m,n}^{t - 1})|{|^2}]\nonumber \\
\overset{(b)}{\leq} &\mathbb{E}[f({\mathbf{\overline w}}^{t-1})] - \frac{\gamma}{2} \mathbb{E}[\Vert \nabla_{\bf{w}} f({\mathbf{\overline w}}^{t-1})\Vert^{2}]+ \frac{\beta\gamma^{2}\sum\limits_{l = 1}^{L} {\sigma _l^2}}{2N}  \nonumber\\ &+ 2\beta^2\gamma^{3} \sum\limits_{m = 1}^{M - 1} {\bigg( {{\mathbbm{1}}_{\{I_m > 1\}} I_m^2\! \sum\limits_{l = {L_{m - 1}} + 1}^{{L_{m - 1}} + {L_m}}  \!\!G_l^2} \bigg)},
\end{align}
}%
where (a) follows from $0 < \gamma \leq \frac{1}{\beta}$ and (b) holds because of the following inequality~\eqref{difference_wc} and~\eqref{difference_ws}
{\begin{align}\label{difference_wc}
&\mathbb{E}[ \Vert \nabla_{{{\bf{w}}_m}} f({\mathbf{\overline w}}_m^{t-1}) - \frac{1}{N} \sum_{n=1}^{N} \nabla_{{{\bf{w}}_m}} f_{n} (\mathbf{w}_{m,n}^{t-1})\Vert^{2}] \nonumber \\
 =& \mathbb{E} [ \Vert \frac{1}{N} \sum_{n=1}^{N}\nabla_{{{\bf{w}}_m}} f_{n}({\mathbf{\overline w}}_m^{t-1}) - \frac{1}{N} \sum_{n=1}^{N} \nabla_{{{\bf{w}}_m}} f_{n} (\mathbf{w}_{m,n}^{t-1})\Vert^{2}] \nonumber \\
=& \frac{1}{N^{2}} \mathbb{E} [\Vert \sum_{n=1}^{N} \big( \nabla_{{{\bf{w}}_m}} f_{n}({\mathbf{\overline w}}_m^{t-1}) - \nabla_{{\bf w}_m} f_{n} (\mathbf{w}_{m,n}^{t-1}) \big)\Vert^{2}] \nonumber \\
\leq& \frac{1}{N} \mathbb{E} [ \sum_{n=1}^{N} \Vert \nabla_{{{\bf{w}}_m}} f_{n}({\mathbf{\overline w}}_m^{t-1}) - \nabla_{{{\bf{w}}_m}} f_{n} (\mathbf{w}_{m,n}^{t-1})\Vert ^{2}] \nonumber \\
\leq& \beta^2\frac{1}{N} \sum_{n=1}^{N}\mathbb{E}[ \Vert {\mathbf{\overline w}}_m^{t-1} - \mathbf{w}_{m,n}^{t-1}\Vert^{2}] \nonumber \\
\leq&  {\mathbbm{1}}_{\{I_m > 1\}} 4\beta^2\gamma^{2} I_m^{2}\!\! \sum\limits_{l = L_{m-1}+1}^{L_{m-1}+L_m}  {G _l^2},
\end{align}
}%
where the first inequality follows by using $\Vert \sum_{i=1}^{N} \mathbf{z}_{i}\Vert^{2} \leq N \sum_{i=1}^{N} \Vert \mathbf{z}_{i}\Vert^{2}$ for any vectors $\mathbf{z}_{i}$; the second inequality follows from the smoothness of each local loss function $f_{n}$ by {\bf Assumption \ref{asp:1}}; and the third inequality follows from {\bf Lemma \ref{lm:diff-avg-per-node}}. Moreover, we have
{ \begin{align}\label{difference_ws}
&\mathbb{E}[ \Vert \nabla_{{{\bf{w}}_M}} f({\mathbf{\overline w}}_M^{t-1}) - \frac{1}{N} \sum_{n=1}^{N} \nabla_{{{\bf{w}}_M}} f_{n} (\mathbf{w}_{M,n}^{t-1})\Vert^{2}] \nonumber \\
\leq& \beta^2\frac{1}{N} \sum_{n=1}^{N}\mathbb{E}[ \Vert {\mathbf{\overline w}}_M^{t-1} - \mathbf{w}_{M,n}^{t-1}\Vert^{2}] \overset{(a)}{=}0, 
\end{align}
where (a) holds because sub-models of $M$-th tier are aggregated in each training round (i.e., $I_M = 1$). Therefore, at any training round $t$, the $M$-th tier sub-model of each client is the aggregated version of sub-models.

Dividing the both sides of Eqn.~\eqref{eq:11} by $\frac{\gamma}{2}$ and rearranging terms yields
{ \begin{align}\label{}
&\mathbb{E}\left [\Vert \nabla_{\bf{w}} f({\mathbf{\overline w}}^{t-1})\Vert^{2}\right] 
\leq\frac{2}{\gamma} \left(\mathbb{E}\left[f({\mathbf{\overline w}}^{t-1})\right]  - \mathbb{E}\left[f({\mathbf{\overline w}}^{t})\right]\right) \nonumber \\+& \frac{\beta\gamma \sum\limits_{l = 1}^{L} {\sigma_l^2} }{N}  \nonumber + 4\beta^2\gamma^{2} \sum\limits_{m = 1}^{M - 1} {\bigg( {{\mathbbm{1}}_{\{I_m > 1\}} I_m^2 \sum\limits_{l = {L_{m - 1}} + 1}^{{L_{m - 1}} + {L_m}}  G_l^2} \bigg)}. 
\end{align}
}%
Summing over $t\in\{1,\ldots, R\}$ and dividing both sides by $R$ yields
{\scriptsize \begin{align*}
&\frac{1}{R} \sum_{t=1}^{R} \mathbb{E}\left [\Vert \nabla_{\bf{w}} f({\mathbf{\overline w}}^{t-1})\Vert^{2}\right] \\
\leq& \frac{2}{\gamma R} \!\!\left(\!f({\mathbf{\overline w}}^{0}) \!-\! \mathbb{E}\!\!\left[f({\mathbf{\overline w}}^{R})\right]\!\right) \!\!+\!\frac{\beta\gamma \!\sum\limits_{l = 1}^{L} \!\!{\sigma_l^2} }{N}  \nonumber \!+\! 4\beta^2\gamma^{2}\!\!\sum\limits_{m = 1}^{M - 1} \!\!\!{\bigg( {{\mathbbm{1}}_{\{I_m > 1\}} I_m^2\!\!\!\!\!\! \sum\limits_{l = {L_{m - 1}} + 1}^{{L_{m - 1}} + {L_m}}  \!\!\!\!\!G_l^2} \bigg)} \\
\overset{(a)}{\leq} &  \frac{2}{\gamma R} \left(f({\mathbf{w}}^{0}) -f^{\ast}\right) +\!\frac{\beta\gamma \sum\limits_{l = 1}^{L} {\sigma_l^2} }{N}  \nonumber \!+\! 4\beta^2\gamma^{2}\! \sum\limits_{m = 1}^{M - 1} \!\!{\bigg(\! {{\mathbbm{1}}_{\{I_m > 1\}} I_m^2 \!\!\sum\limits_{l = {L_{m - 1}} + 1}^{{L_{m - 1}} + {L_m}}  \!\!G_l^2} \bigg)},
\end{align*}}
where (a) follows because $f^{\ast}$ is the minimum value of problem {\eqref{minimiaze_loss_function}}.

\section*{{C. Proof of Proposition 1}} \label{bb}

Assuming $I_m>1, \forall m \in {\mathcal{M}}\setminus{\{{M}\}}$, we perform the functional analysis for the objective function of problem~\eqref{subproblem_1}. We denote $\Theta '({\bf{I}})$ as $\Theta '({\bf{I}}) = \frac{{2\vartheta }}{\gamma }\frac{{a{e_m}{I_m} + \sum\limits_{m \in {\cal M}''} {{b_m}{e_m}} }}{{{e_m}{I_m}(c - 4{\beta ^2}{\gamma ^2}(\sum\limits_{m \in {\cal M}''}  {d_m}I_m^2))}}$, where $a = {T_1}$, ${b_m} = {T_{m,2}} + {T_{m,3}}$, $c = \varepsilon  - \frac{{\beta \gamma \sum\limits_{l = 1}^L {\sigma _l^2} }}{N}$, ${d_m} = \sum\limits_{l = 1}^L  \left( {{\mu _{m,l}} - {\mu _{m - 1,l}}} \right){\widetilde G}_l^2$ and ${e_m} = \frac{{\mathop \Pi \limits_{m \in {\cal M}''} {I_m}}}{{{I_m}}}$. Without loss of generality, we take the first-order derivative for the arbitrary $m'$-th tier (${m' \in {\cal M}''}$) sub-model MA interval $I_{m'}$ yields
\begin{align}\label{accuracy_cons}
\frac{{\partial \Theta '({\bf{I}})}}{{\partial {I_{m'}}}} = \frac{{2\vartheta }}{\gamma }\frac{{\Xi \left( {\bf I} \right)}}{{{{\big( {{e_{m'}}{I_{m'}}(c - 4{\beta ^2}{\gamma ^2}(\sum\limits_{m \in {\cal M}''}  {d_m}I_m^2))} \big)^2}}}},
\end{align}
where 
{\footnotesize \begin{align}\label{E_I}
&\Xi \!\left( {\bf{I}} \right) \!=\! 8(a{e_{m'}} \!+\!\!\!\!\!\!\!\!\!\!\!\! \sum\limits_{m \in {{\cal M}''\setminus{\{{m'}\}}}}  \!\!\!\!\!\! {\frac{{{b_m}{e_{m'}}}}{{{I_m}}}){e_{m'}}} {\beta ^2}{\gamma ^2}{d_{m'}}I_{m'}^3 \!\!+\!\!12{b_{m'}}e_{m'}^2{\beta ^2}{\gamma ^2}{d_{m'}}I_{m'}^2 \nonumber \\&- {b_{m'}}e_{m'}^2(c - 4{\beta ^2}{\gamma ^2}\!\!\!\!\!\!\!\!\!\!\sum\limits_{m \in {{\cal M}''\setminus{\{{m'}\}}}}  \!\!\!\!\!\!\!\!{d_m}I_m^2).
\end{align}}
Since $\frac{{\partial \Xi \left( {\bf{I}} \right)}}{{\partial {I_{m'}}}} = 24{\beta ^2}{\gamma ^2}{e_{m'}}{d_{m'}}{I_{m'}}(a{e_{m'}}{I_{m'}} + \sum\limits_{m \in {{\cal M}''}} {{b_m}{e_m}} ) > 0$, $\Xi \left( {\bf I} \right)$ is a monotonically increasing function for any $I_{m'}$. Given that ${\left. {\Xi \left( {\bf{I}} \right)} \right|_{{I_{m'}} = 0}} \!=\!  - e_{m'}^2{b_{m'}}(c \!-\! 4{\beta ^2}{\gamma ^2}\!\!\!\!\!\!\!\!\!\sum\limits_{m \in {{\cal M}''\setminus{\{{m'}\}}}} \!\!\!\!\!\!\!\! {d_m}I_m^2) < 0$ and ${\left. {\Xi \left( {\bf{I}} \right)} \right|_{{I_{m'}} \to  + \infty }} =  + \infty  > 0$, there must exist ${{\hat I}_{m'}}$ satisfying ${\left. {\Xi \left( {\bf{I}} \right)} \right|_{{I_{m'}} = {{\hat I}_{m'}}}} = 0$ for ${{\hat I}_{m'}} \in \left( {0, + \infty } \right)$, indicating that the objective function decreases and then increases with respective to $I_{m'}$ and thus reaches a minimum at $I = {\hat I}_{m'}$. Considering $|{\cal M}''|$ sub-model MA intervals, we entend equation~\eqref{E_I} to a system of $|{\cal M}''|$ equations, i.e., $\frac{{\partial \Theta '({\bf{I}})}}{{\partial {\bf{I}}}} = [\frac{{\partial \Theta '({\bf{I}})}}{{\partial {I_{m_1}}}},\frac{{\partial \Theta '({\bf{I}})}}{{\partial {I_{m_2}}}},...,\frac{{\partial \Theta '({\bf{I}})}}{{\partial {I_{m_{|{\cal M}''|}}}}}] = {\bf{0}}$ (${m_1, m_2, ..., m_{|{\cal M}''|} \in {\cal M}''}$). Then, we utilize the classical Newton-Jacobi iterative method~\cite{geng2019pipeline} to obtain the solution ${\bf{\hat I}} = [ {{{\hat I}_{m_1}},{{\hat I}_{m_2}},...,{\hat I}_{m_{|{\cal M}''|}}}]$. According to constraint $\mathrm{C6}$, it is obvious that arbitrary optimal $m'$-th tier sub-model MA interval $I_{m'}^*$ only exists on both sides of ${\hat I}_{m'}$, namely,
{\begin{align}\label{accuracy_cons}
{{\bf{I}}^*} =  {{\mathrm{argmin}} _{\scriptstyle{\bf{I}} \in \{ \left. {I_{m}} \right|\hfill
\scriptstyle{I_{m}} \in \{ \max (\left\lfloor {{{\hat I}_{m}}} \right\rfloor, \left\lceil {{{\hat I}_{m}}} \right\rceil )\} \} \hfill}}\Theta '\!\!\left( {\bf{I}} \right),
\end{align}}
where ${\bf{\hat I}} = \{ {{\hat I}_m}\mid m \in {\cal M}''\} $, $\left\lfloor {\cdot} \right\rfloor$ and $\left\lceil {\cdot} \right\rceil$ represent floor and ceiling operations.

\ifCLASSOPTIONcaptionsoff
  \newpage
\fi



%



\bibliographystyle{IEEEtran}
\bibliography{reference}

\end{document}